\documentclass[10pt,twocolumn,twoside]{IEEEtran} 
\IEEEoverridecommandlockouts          

\usepackage{graphicx}
\usepackage{subfigure}
\usepackage{caption}
\usepackage{amsthm} % For proof environment
\usepackage{amsmath}
\usepackage{amssymb}
\usepackage{accents}
\usepackage{statex}
\usepackage{cases}
\usepackage{mathrsfs}
\usepackage{tikz,pgfplots}
\pgfplotsset{compat=newest}
\usepackage{xcolor}
\usepackage{caption}
\usepackage{setspace}
\usepackage{booktabs}
\usepackage{array}
\usepackage{cite}

\usepackage{enumitem}
\usepackage{algorithm}
\usepackage{algorithmic} % <===========================================
\usepackage{multirow}
\usetikzlibrary{patterns}
\usetikzlibrary{graphs}
\usetikzlibrary{graphs.standard}
\newcolumntype{C}[1]{>{\centering\arraybackslash}p{#1}}
\usepackage{eqparbox}

\usetikzlibrary{positioning,fit,calc,arrows.meta,backgrounds,shapes.misc}

\definecolor{sysblue}{RGB}{54,114,201}   % 边框更深
\definecolor{sysfill}{RGB}{226,236,252}
\definecolor{lossgray}{RGB}{225,225,225}
\definecolor{datablue}{RGB}{173,200,255}
\definecolor{omegacol}{RGB}{255,165,100}

\tikzset{
	>={Stealth[length=2.6mm]},
	smallbox/.style={draw, rounded corners=2pt, fill=lossgray, minimum width=2.9cm, minimum height=1.05cm, line width=0.8pt},
	datablock/.style={draw, fill=datablue, minimum width=1.25cm, minimum height=0.7cm, line width=0.8pt},
	omega/.style={draw=omegacol, fill=white, circle, minimum size=7.8mm, inner sep=0pt,
		line width=1.0pt, double, double distance=1.1pt},
	plus/.style={draw, circle, inner sep=1.5pt, minimum size=9mm, line width=1.0pt},
	sysblock/.style={draw=sysblue, fill=sysfill, minimum width=5.2cm, minimum height=1.1cm,
		line width=1.2pt, rounded corners=2pt},
	zoompad/.style={draw=none, fill=datablue!45, rounded corners=3pt},
	dashedframe/.style={draw, rounded corners=3pt, dashed, line width=0.8pt, inner sep=7pt}
}

\usetikzlibrary{automata}

\newtheorem{proposition}{Proposition}

\newtheorem{lemma}{Lemma}

\newtheorem{remark}{Remark}
\newtheorem{assumption}{Assumption}

\newlength\figureheight
\newlength\figurewidth

\DeclareMathOperator*{\argmin}{argmin}

\DeclareMathOperator*{\mD}{\mathcal{D}}

\newcommand{\revise}[1]{\textcolor{black}{#1}}

\title{\bf Meta-Learning for Rapid Adaptation in Reference Tracking of Uncertain Nonlinear Systems}

\author{Jiaqi Yan, Ankush Chakrabarty, Niklas Schmid, John Lygeros, and Alisa Rupenyan
	\thanks{
		J. Yan is with the School of Automation Science and Electrical Engineering, Beihang University, Beijing 100191, P. R. China (jqyan@buaa.edu.cn).
		
		A. Chakrabarty is with the Mitsubishi Electric Research Laboratories, Cambridge, MA 02139, USA (achakrabarty@ieee.org).
		
		N. Schmid and J. Lygeros are with the Automatic Control Laboratory, ETH Zurich, Switzerland (nikschmid@ethz.ch, jlygeros@ethz.ch).
		
		A. Rupenyan is with the ZHAW Centre for Artificial Intelligence, Zurich University of Applied Sciences, Switzerland  (alisa.rupenyan@zhaw.ch).
		This work is supported by the Swiss National Science Foundation through NCCR Automation under Grant agreement 51NF40\_180545.
	}
}

\begin{document}
	\maketitle
	
	\begin{abstract}
		In this paper, we address the problem of reference tracking for uncertain nonlinear systems. Since collecting data from the target system (i.e., the system of interest) is often challenging, our objective is to design optimal controllers using limited target system data. Meta-learning provides a promising paradigm by leveraging offline data from source systems (systems sharing structural similarities with the target system) to accelerate training and enhance control performance. Motivated by this idea, we propose a meta-learning-based control framework that tailors the implicit model-agnostic meta-learning (iMAML) algorithm to the control setting. The framework operates in two phases: an (offline) meta-training phase, where an aggregated representation is learned from source data to capture the shared system dynamics among similar systems, and an (online) meta-adaptation phase, where this representation is fine-tuned on the target system using only a few data samples and limited adaptation steps. We formulate this framework as a bi-level optimization problem and provide an efficient solution with reduced storage complexity and few approximations. The proposed framework is general, allowing various learning algorithms to be integrated. To demonstrate this flexibility, we propose two specific learning algorithms that can be incorporated into our framework based on a neural state-space model and a deep Q-network, respectively. The primary distinction between these approaches is whether explicit system identification is required. Numerical simulations and hardware experiments demonstrate that the proposed methods enhance control performance and consistently outperform baseline approaches.
	\end{abstract}
	
	\def\abstractname{Note to Practitioners}
	\begin{abstract}
		This work is motivated by the fact that collecting sufficient data from physical systems for control design is often costly and time-consuming. To address this issue, we propose a meta-learning-based control framework that enhances adaptation speed and control performance when only limited target system data are available. The proposed approach leverages historical data from similar systems to pre-train a model offline, which can then be efficiently adapted to new systems online with minimal data and computation. Although we demonstrate the framework using two specific algorithms, it is general and readily extendable to various learning-based control methods. Practitioners can apply this framework to improve control performance and data efficiency in real-world applications such as autonomous driving, robotic manipulation, and process control, where conventional data-driven methods typically require extensive online training. Experimental results show that our methods outperform existing meta-learning and data-driven control baselines in both simulation and hardware experiments. Moreover, the results highlight the effectiveness of the proposed approach in bridging the sim-to-real gap. Future work will focus on extending this framework to multi-agent systems and enabling real-time adaptation in highly dynamic environments.
	\end{abstract}
	
	\begin{IEEEkeywords}
		Meta-learning, optimal control, data efficiency, data-driven control.
	\end{IEEEkeywords}

	%%%%%%%%%%%%%%%%%%%%%%%%%%%%%%%%%%%%%%%%%%%%%%%%%%%%%%%%%%%%%%%%%%%%%%
	\section{Introduction}

	Control systems that can easily adapt to new scenarios and changing dynamics are crucial for enabling flexible automation in various application fields, in particular manufacturing and robotics. Learning-based approaches to adaptation, such as reinforcement learning, adaptive dynamic programming, and stochastic optimization-based control \cite{dierks2012online,modares2017optimal,pepyne2000optimal} offer a promising approach to this problem. %change citations include review
	Meta-learning \cite{vilalta2002perspective} can complement these approaches when there is limited access to data from the system to be controlled (\textit{target system}). It combines data from \textit{source systems} that share structural similarities with the target system. Thus, data from numerical models, digital twins, or similar physical systems can be used to train the meta-learning model before deployment on the target system. The model is then adapted with limited evaluations from the target system. Therefore, meta-learning comprises two phases: a meta-training phase that pretrains a model to capture similarities among source systems, and a meta-adaptation phase that fine-tunes the pretrained model on the target system.  \revise{When the target and the source systems are similar, it is also possible to use transfer learning to pass knowledge from different source systems. %{Usually, transfer learning requires more data than meta-learning to train a separate model for the target system. 
    %The distinction between transfer learning and meta-learning is discussed in \cite{dumoulin2021comparing}. 
Transfer learning typically requires sufficient target system data to fine-tune a pre-trained model, and the transferred knowledge often comes from a single source task. On the other hand, multi-task learning assumes all tasks are available during training and optimizes for joint performance rather than rapid adaptation \cite{Cui2025,Pei2023}. In contrast, meta-learning  optimizes for rapid adaptation to new tasks with minimal data, which is reflected in the bi-level optimization structure of meta-learning: the inner loop simulates adaptation on individual tasks, while the outer loop optimizes for fast adaptation capability itself. This structure is absent in standard transfer or multi-task learning formulations.}
	
	The meta-learning approach is particularly suitable for nonlinear systems with uncertainties, as it eliminates the need for a complete re-design of the controller. Most meta-learning-based control algorithms, e.g., \cite{toso2024meta,chakrabarty2023meta,zhan2022calibrating}, rely on the Model Agnostic Meta Learning (MAML) \cite{finn2017model} to enable rapid adaptation to new tasks. MAML involves solving a bi-level optimization problem in the meta-training stage. The inner problem calculates and propagates derivatives of the training loss function along the full optimization path (the length of which is the same as the number of inner-loop updates/steps). The outer problem then accumulates information from these inner-loop updates and computes an outer-loop gradient direction. By repeating these two steps, the MAML learner is expected to asymptotically converge to a pre-trained controller from which rapid adaptation is possible on the target system. Choosing a large number of inner-loop steps, while leading to better outer-loop directions, incurs high memory complexity, and is not practical for a large number of inner-loop adaptation steps. Existing approaches propose various approximations to obtain efficient outer loop estimates, or cast the problem in mathematical frameworks that reduce training time. For example, in \cite{finn2017model}, a first-order approximation is used for specific rectified linear unit (ReLU) activation functions. However, such approximations sometimes oversimplify the computations and degrade learning performance. Alternatively, the inner loop is solved as an evolution of the ordinary differential equation (ODE) with gradients obtained through the adjoint ODE in \cite{pmlr-v206-li23c_adjoint}. 
	
	The aforementioned methods, while improving computational efficiency, often rely on gradient approximations. To overcome this limitation, the implicit MAML (iMAML) algorithm leverages the implicit function theorem to achieve efficient outer-loop gradient computation with fewer approximations \cite{rajeswaran2019meta}. This approach solves the inner-loop optimization over many iterations without incurring excessive memory overhead, as it is agnostic to the optimization path and depends only on the final gradient direction. Moreover, minimizing such approximations can enhance adaptation performance. The problem of reference tracking in uncertain nonlinear systems making use of iMAML was discussed in \cite{yan2024mpc}. The gradients were computed exactly during training without relying on the entire optimization path, and achieved good predictive performance for model predictive control (MPC) in different (related) systems.
	
We build upon the approach from \cite{yan2024mpc} and propose here a general meta-learning-based control framework. \revise{By ``general'', we mean that the framework can accommodate diverse learning-based control approaches without being restricted to a specific algorithm family. Specifically, the framework supports any control method that can be formulated with differentiable loss functions, thereby unifying
        \begin{itemize}
            \item \textit{Indirect data-driven methods}, where a model is first identified and then used for control design (as in \cite{yan2024mpc}), and
\item \textit{Direct data-driven methods}, where the control policy is optimized directly from data.
        \end{itemize}
The comparison between these methods is illustrated in Fig.~\ref{fig:direct_indirect}. In contrast,~\cite{yan2024mpc} focuses on a particular indirect implementation based on neural state-space models (NSSMs) for system identification. As such, it can be viewed as a special case of the framework proposed here.}

\begin{figure}
    \centering
    \includegraphics[width=0.85\linewidth]{   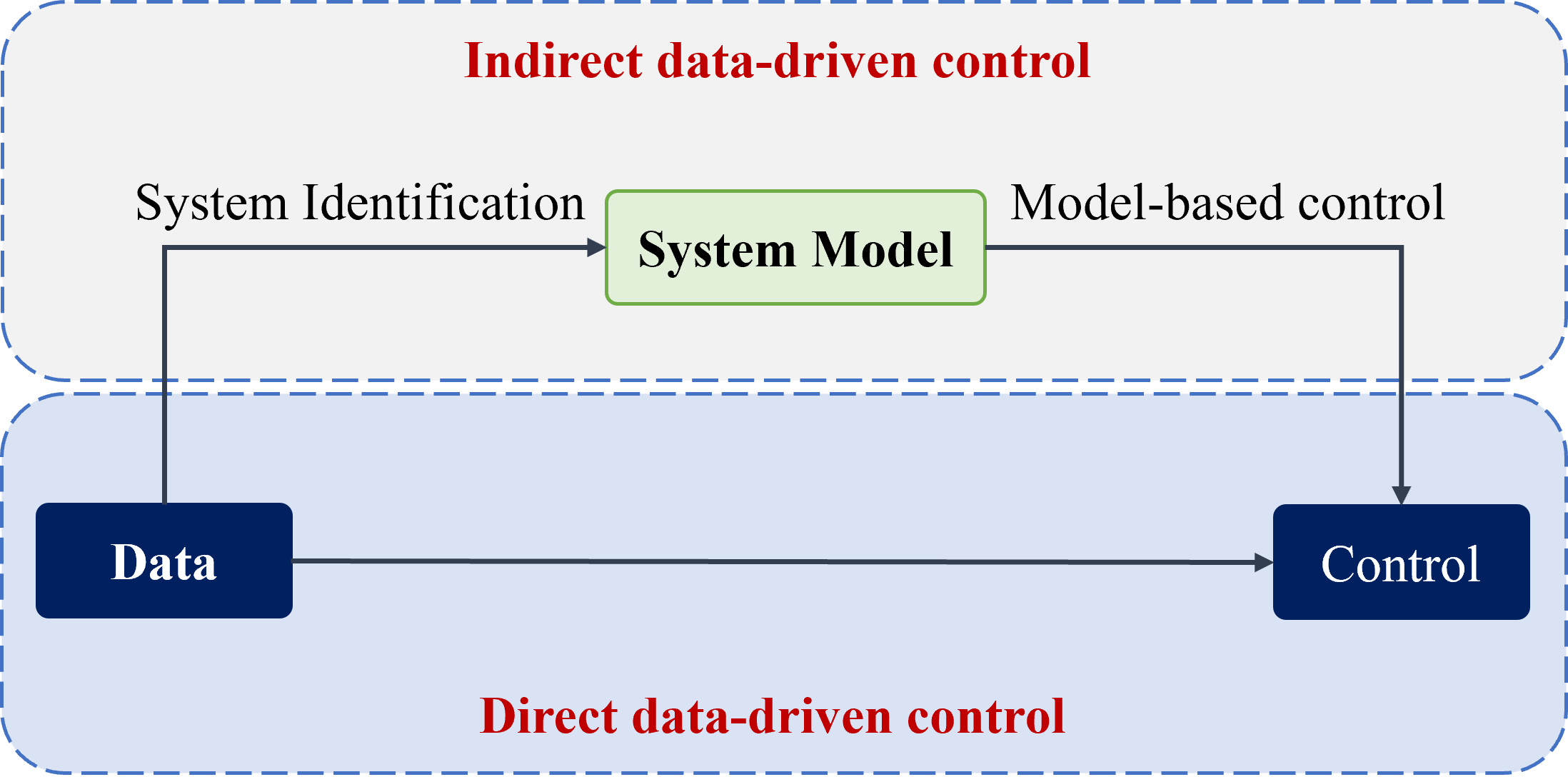}
    \caption{\revise{Direct vs. indirect data-driven control methods.}}
    \label{fig:direct_indirect}
\end{figure}

The main contributions of this paper are: %rewrite this
	\begin{enumerate}

        \item \revise{While meta-learning has been shown to be effective for both model adaptation \cite{lin2020system,chakrabarty2025meta,chakrabarty2023meta} and control policy adaptation \cite{harrison2018control,ghadirzadeh2021bayesian,chen2024meta}, existing implementations are typically applied on a case-by-case basis. In contrast, our framework is general and flexible, capable of incorporating various learning algorithms. Unlike existing implementations that are algorithm-specific, our framework provides a unified mathematical structure based on bi-level optimization that remains invariant across different learning algorithms---only the specific form of the loss function changes to reflect the particular control approach employed. }

		\item To demonstrate the flexibility of our framework, we propose two representative algorithms: an \textit{indirect data-driven control} method based on neural state-space models (NSSM), and a \textit{direct data-driven control} method based on a Deep Q-Network (DQN). The key difference lies in whether explicit system identification is required, which is reflected in the corresponding loss function design. We show that both methods enable efficient adaptation to unseen target tasks while maintaining strong tracking performance under system uncertainties.

        \item \revise{Under standard assumptions, we establish convergence guarantees for the proposed framework and provide stabilization analysis for the resulting control algorithms. This moves beyond simply applying learning algorithms to control problems, thereby highlighting the control-theoretic foundations of the proposed approach.}
		
	\end{enumerate}
	
	Numerical simulations demonstrate that our iMAML-based method outperforms multiple baselines, including both MAML-trained models and models trained purely on target system data. \revise{Moreover, hardware experiments on the ball-on-a-plate system are particularly designed to evaluate the direct meta-learning approach. Experimental results verify that by pre-training the controller in simulation, the proposed algorithm enables faster adaptation to the real system while requiring significantly less real-world data compared to training solely on physical system data. This approach effectively reduces the sim-to-real gap in control system implementation.}
	
	The remainder of this paper is organized as follows: Section~\ref{sec:problem} formulates the problem and introduces the datasets. Section~\ref{sec:meta_learning} presents the general meta-learning framework and its solution. Section~\ref{sec:indirect} develops the indirect and direct data-driven algorithms, respectively, that can be integrated into the proposed framework. Finally, numerical simulations and hardware experiments are given in Section~\ref{sec:sim} to validate the proposed algorithms, and Section~\ref{sec:conclude} concludes the paper.

	\section{Problem Formulation}\label{sec:problem}
	We consider a family of parameterized discrete-time nonlinear systems of the form 
	\begin{equation}\label{eqn:sys}
		\begin{split}
			x_{t+1} &= f(x_t, u_{t+1}, \theta_f), \\
			y_t &= g(x_t, \theta_g),
		\end{split}
	\end{equation}
	where $x \in \mathbb{R}^n$ and \revise{$y \in\mathcal{Y} \subseteq \mathbb{R}^m$} represent the system state and \revise{(constrained) output}, and $u \in \mathcal{U} \subseteq \mathbb{R}^p$ is the system input, where \revise{$\mathcal{U}$ and $\mathcal{Y}$ are compact, convex sets representing the input and output constraints.} Both $f$ and $g$ are nonlinear functions, and $\theta := [\theta_f; \theta_g] \in \mathbb{R}^w$ denotes a vector of unknown parameters. 
	
	The target system is described by \eqref{eqn:sys} with parameters $\theta^0$. The true value of $\theta^0$ is unknown but is drawn from a distribution  
	$
	\theta^0 \sim \Theta,
	$
	where $\Theta$ is often informed by domain knowledge. We assume that we can obtain samples for $\theta$, even if the distribution and support set $\Theta$ are unknown.
	
	Let $U(\theta^0, T^0)$ and $Y(\theta^0, T^0)$ represent the input and output trajectories generated from the target system over a horizon $T^0$. We construct the target dataset as
	\begin{equation}
		\mathcal{D}_{\text{target}} := \{U(\theta^0, T^0), Y(\theta^0, T^0)\}.
	\end{equation}
	Despite the uncertain dynamics, our objective is to design a controller to asymptotically track some reference signal $\bar{y}$ such that
	\begin{equation}\label{eqn:objective}
		\lim_{t\to\infty} y_t = \bar{y}_{t}.
	\end{equation}

	In many applications, collecting data from the target system is expensive. Therefore, $\mathcal{D}_{\text{target}}$ is typically of limited size. Training with such limited data can lead to poor control performance. To overcome this challenge, we propose a meta-learning-based control framework. The key idea is to learn an \textit{aggregated representation}—typically offline—using data from multiple source systems that share structural similarities with the target system. \revise{The aggregated representation captures structural similarities among the source systems and serves as an informative initialization for rapid adaptation to a target system with limited data. From an engineering perspective, this design is motivated by practical scenarios in automation and robotics where collecting extensive data from the physical target system is costly or potentially unsafe, while simulation models, digital twins, or similar systems are readily available offline. In such cases, leveraging source-system data to learn a shared representation significantly improves data efficiency.} 
    
     \revise{Also note that the effectiveness of the aggregated representation depends on the similarity between source and target systems. If the target dynamics differ significantly from the source systems, the adaptation performance may deteriorate. As such, representative source-system selection is important for successful deployment.} Accordingly, we assume that each source system follows the dynamics in \eqref{eqn:sys}, where the $k$-th system is parameterized by $\theta_k$ with $\theta_k \sim \Theta$. It is important to note that the each value of $\theta_k$ need not be known. 

Assume that we have access to a dataset consisting of input–output trajectories collected from $N_s$ different source systems. We denote the resulting source dataset as 
	\begin{equation}
		\begin{split}
			\mathcal{D}_{\text{source}}&= \{(U(\theta^k, T^k), Y(\theta^k, T^k))\}_{k=1}^{N_s}\\&:= \{\mathcal{D}^k\}_{k=1}^{N_s}, 
		\end{split}
	\end{equation}
	where $T^k$ is the length of trajectory collected from the $k$-th system. 
    \revise{Note that when the source dataset contains only a single system ($N_s=1$), our framework reduces to standard transfer learning. Meta-learning generalizes this by learning from multiple source systems and optimizing  for adaptation efficiency through the bi-level structure.}

	%%%%%%%%%%%%%%%%%%%%%%%%%%%%%%%%%%%%%%%%%%%%%%%%%%%%%%%%%%%%%%%%%%%%
	\section{A Meta-Learning Framework}\label{sec:meta_learning}
	The proposed meta-learning framework is illustrated in Fig.~\ref{fig:phase}. The framework comprises two phases: an \textit{offline} meta-training phase and an \textit{online} meta-adaptation phase. During the meta-training phase, an aggregated representation is learned using source data to capture similarities among the source systems. In the meta-adaptation phase, the representation is quickly fine-tuned for the target system with limited target data and few online adaptation steps. In this section, we do not assume any specific loss functions to achieve the control objective. \revise{The framework is general in the sense that it accommodates any learning-based control approach that can be formulated through differentiable loss functions $\widehat
\ell(\cdot)$. This includes both indirect methods (e.g., model-based approaches such as neural state-space models combined with MPC) and direct methods (e.g., reinforcement learning policies such as DQN, actor-critic methods, or policy gradient algorithms). The only requirement is that the inner and outer loss functions be differentiable with respect to their parameters, allowing gradient-based optimization.}
    
In the squeal, we show how to effectively optimize these parameters to obtain the aggregated representation. Note that only source data is used in this stage. Later in Section~\ref{sec:indirect}, we will design specific loss functions and incorporate target data to achieve the tracking objective~\eqref{eqn:objective}.
	\begin{figure}[!tb]
		\centering
		\includegraphics[width=0.9\linewidth]{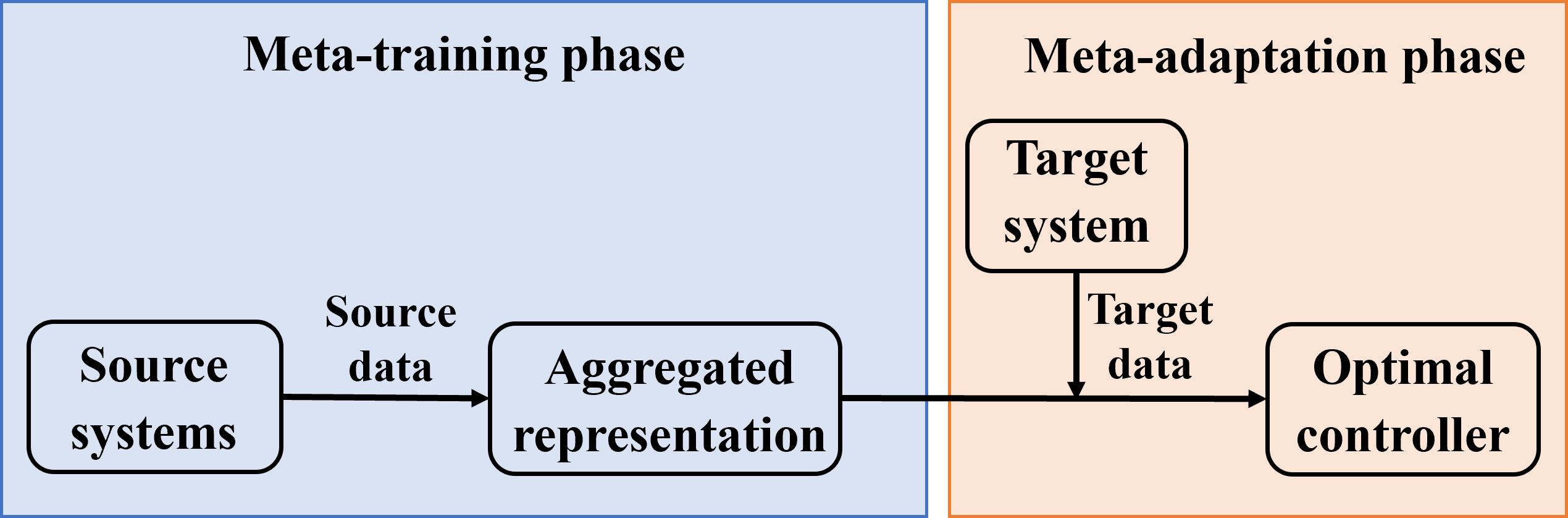}
		\caption{The two phases in the meta-learning framework.}\label{fig:phase}
	\end{figure}

	\subsection{Bi-level optimization problem in meta-training}\label{sec:bilevel}
	
	\begin{figure}[!tb]
		\centering
		\includegraphics[width=0.8\linewidth]{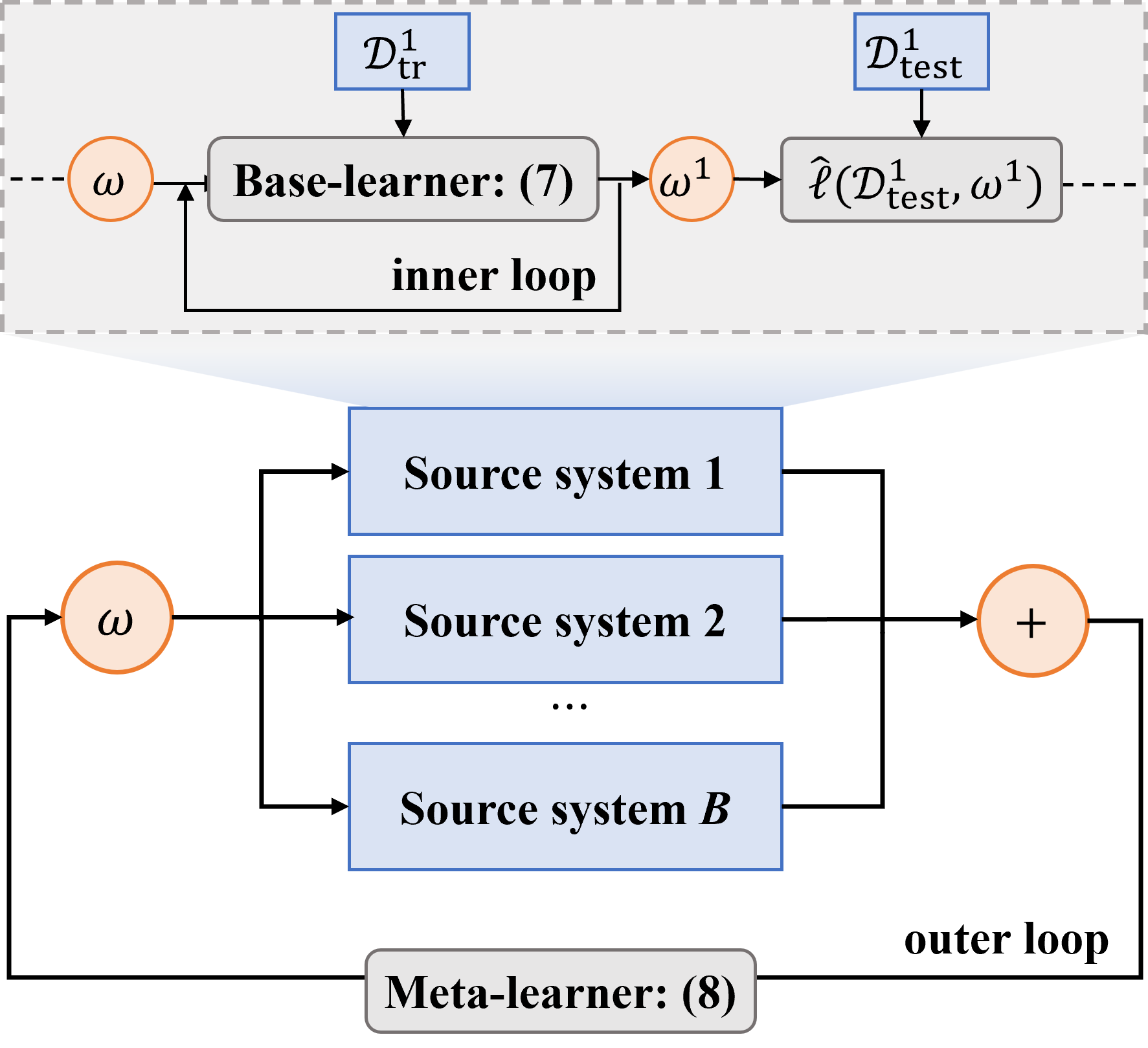}
		\caption{Information flow in the meta-training phase.}\label{fig:diag}
	\end{figure}

	%\begin{figure}
	%	\hspace{10pt}
	%    %\resizebox{0.7\linewidth}{!}{\input{information_flow.tikz}}
	%	\resizebox{0.5\textwidth}{!}{\input{information_flow.tikz}}
	%	\caption{\revise{Information flow in the meta-training phase.}}
	%	\label{fig:diag}
	%\end{figure}

	As shown in Fig.~\ref{fig:diag}, the meta-training phase has two loops. In each outer-loop iteration, a batch of trajectories 
	\( \{\mathcal{D}^b\}_{b=1}^B \) is sampled from the source dataset 
	\( \mathcal{D}_{\text{source}} \). A \textit{meta-learner} updates the aggregated representation $\omega$ by evaluating performance across multiple source systems. In the inner loop, a \textit{base-learner} adapts the aggregated representation \( \omega \), generating system-specific parameters 
	\( \omega^b \) using each source data \( \mathcal{D}^b \). Next, we describe the procedure in detail.
	
	\textit{Inner problem:} The base-learner in the inner problem focuses on adapting the aggregated representation $\omega$ to each individual source system. We partition the dataset associated with the $b$-th source system, denoted by $\mathcal{D}^b$, into a training set $\mathcal{D}^b_{\text{tr}}$ and a testing set $\mathcal{D}^b_{\text{test}}$. The training set is used to fine-tune the aggregated parameters $\omega$ into system-specific parameters $\omega^b$ on each source system by solving the following optimization problem:
	\begin{equation}\label{eqn:inner_imaml}
		\text{Inner Problem: }\tilde\omega^b(\omega):=\argmin_{\psi}  \widehat{\ell}(\mathcal{D}^b_{\text{tr}}; \psi)+\frac{\gamma}{2}||\psi-\omega||^2.
	\end{equation}
	Note the dependence of $\tilde\omega^b(\omega)$ on the aggregated representation $\omega$. In this formulation, $\widehat{\ell}(\cdot)$ denotes the inner-problem loss function that quantifies the adaptation performance for the $b$-th source system. The regularization term $||\psi - \omega||^2$ encourages the adapted parameters to remain close to the aggregated representation, with the regularization strength governed by the parameter $\gamma > 0$. As will be shown in Section~\ref{sec:appro}, this regularization plays a critical role by enabling a closed-form expression for the outer-problem gradient via the Implicit Function Theorem.
	
	\textit{Outer problem:}
	The testing datasets from multiple source systems, denoted by $\{\mathcal{D}^b_{\text{test}}\}_{b=1}^B$, are utilized in the outer problem to update the aggregated representation $\omega$. This update is performed by the meta-learner that assesses the adapted performance across all source systems, as defined by the following objective:
	\begin{equation}\label{eqn:outer_prob}
		\text{Outer Problem: } \min_\omega  L(\omega):=\sum_{b=1}^B \widehat{\ell}(\mathcal{D}^b_{\text{test}}; \tilde\omega^b(\omega)).
	\end{equation}
	The meta-learner aims to learn a set of parameters $\omega$ that can produce good task-specific parameters in different systems.
	
	Combining \eqref{eqn:inner_imaml} and \eqref{eqn:outer_prob}, the meta-training phase is formulated as a bi-level optimization problem. As shown in Fig.~\ref{fig:diag}, these two problems are optimized alternately until convergence is achieved.
	
	\subsection{Solution to the bi-level optimization problem}\label{sec:appro}
	
	The inner optimization problem \eqref{eqn:inner_imaml} can be solved using iterative methods such as gradient descent:
	\begin{equation}\label{eqn:inner_gd}
		\omega^b(\omega)\gets \omega^b(\omega) -\beta_{\text {in }} \big(\nabla_\psi\widehat{\ell}(\mathcal{D}^b_{\text{tr}}; \psi)|_{\psi=\omega^b(\omega)}+\gamma(\omega^b(\omega)-\omega)\big),
	\end{equation}
	where $\beta_{\text{in}}$ denotes the inner-loop learning rate. The same is true for \eqref{eqn:outer_prob}:
	\begin{equation}\label{eqn:outer}
		\omega\gets \omega-\beta_{\text {out }} \sum_{b=1}^B\nabla_\omega  \widehat{\ell}(\mathcal{D}^b_{\text{test}}; \omega^b(\omega)),
	\end{equation}
	where $\beta_{\text {out }}$ is the outer-loop learning rate. \revise{Here we use $\tilde{(\cdot)}$ to denote the exact optimizer, while the notation without $\tilde{(\cdot)}$ represents the solution obtained through iterative approximations.} Note that the dependence of $\omega^b$ on $\omega$ complicates the gradient computation in \eqref{eqn:outer}. By applying the chain rule, we expand each gradient term as
	\begin{equation}\label{eqn:meta_gradient}
		\nabla_\omega \widehat{\ell}(\mathcal{D}^b_{\text{test}}; \omega^b(\omega))= \frac{d \omega^b(\omega)}{d \omega} \cdot P^b
	\end{equation}
	with \begin{equation}\label{eqn:pb}
		P^b:=\nabla_\psi \widehat{\ell}(\mathcal{D}^b_{\text{test}} ; \psi)|_{\psi=\omega^b(\omega)}
	\end{equation}
	which can be efficiently computed via automatic differentiation \cite{rajeswaran2019meta}.
	However, computing $\frac{d \omega^b(\omega)}{d \omega}$ is non-trivial since $\omega^b(\omega)$ is defined via an optimization process \eqref{eqn:inner_imaml}. One possible approach is to backpropagate through the iteration \eqref{eqn:inner_gd}, but this requires storing the full optimization trajectory, which can be memory-intensive or even infeasible for many steps or non-differentiable solvers.

	To address this challenge, the following lemma is introduced to efficiently (in both computation and memory) compute \eqref{eqn:meta_gradient}:
	\begin{lemma}[Implicit Function Lemma~\cite{rajeswaran2019meta}]\label{lmm:implicit}
		Define
		\begin{equation}
			Q^b:=I+\frac{1}{\gamma} \nabla_{\psi}^2 \widehat{\ell}(\mathcal{D}^b_{\text{tr}}; \psi)|_{\psi=\omega^b(\omega)}.
		\end{equation} 
		If $Q^b$ is invertible, then 
		\begin{equation}\label{eqn:Qb}
			\frac{d \omega^b(\omega)}{d \omega}=(Q^b)^{-1}.
		\end{equation}
	\end{lemma}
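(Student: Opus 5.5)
The plan is to apply the implicit function theorem to the first-order stationarity condition of the inner problem \eqref{eqn:inner_imaml}, treating $\omega^b(\omega)$ as the exact minimizer $\tilde\omega^b(\omega)$, or at least as an exact stationary point. Define the map
\begin{equation*}
F(\psi,\omega):=\nabla_\psi \widehat{\ell}(\mathcal{D}^b_{\text{tr}};\psi)+\gamma(\psi-\omega).
\end{equation*}
Since $\tilde\omega^b(\omega)$ minimizes a differentiable objective over all of $\mathbb{R}^w$, it satisfies
\begin{equation*}
F(\tilde\omega^b(\omega),\omega)=0
\end{equation*}
for every $\omega$ in the neighborhood of interest. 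We assume $\widehat{\ell}$ is twice continuously differentiable in $\psi$, so that $F$ is $C^1$.

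The first step is to compute the partial Jacobians of $F$:
\begin{equation*}
\partial_\psi F=\nabla^2_\psi\widehat{\ell}(\mathcal{D}^b_{\text{tr}};\psi)+\gamma I=\gamma Q^b,
\qquad
\partial_\omega F=-\gamma I.
\end{equation*}
Since $\gamma>0$ and $Q^b$ is invertible by hypothesis, $\partial_\psi F$ is invertible at $(\omega^b(\omega),\omega)$. The implicit function theorem then gives a locally unique $C^1$ map $\omega\mapsto\omega^b(\omega)$ solving $F=0$. By local uniqueness, this map coincides with the inner minimizer, provided the minimizer depends continuously on $\omega$.

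Next, differentiate the identity $F(\omega^b(\omega),\omega)\equiv0$ with respect to $\omega$:
\begin{equation*}
\gamma Q^b\,\frac{d\omega^b(\omega)}{d\omega}-\gamma I=0.
\end{equation*}
Dividing by $\gamma$ and inverting $Q^b$ yields \eqref{eqn:Qb}. Because $Q^b$ is symmetric (it is built from a Hessian), the row-versus-column layout convention in \eqref{eqn:meta_gradient} does not matter.

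The main obstacle is a conceptual gap: the lemma is stated for the iterate $\omega^b(\omega)$ produced by \eqref{eqn:inner_gd}, whereas the implicit function theorem only applies at an exact stationary point. I would therefore state explicitly that the identity holds at $\tilde\omega^b(\omega)$, and that using it at the computed iterate is the standard iMAML approximation, exact when the inner loop has converged. The second technical point is to justify that the selected minimizer is the implicit-function branch. If the inner problem has several minimizers, I would restrict to a neighborhood where the minimizer is isolated; this is guaranteed locally once $Q^b$ is invertible.
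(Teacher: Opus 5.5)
Your proposal is correct and follows the same route as the cited iMAML argument on which the paper relies. You differentiate the stationarity condition $\nabla_\psi\widehat{\ell}(\mathcal{D}^b_{\text{tr}};\psi)+\gamma(\psi-\omega)=0$ with respect to $\omega$ to get $Q^b\,\frac{d\omega^b(\omega)}{d\omega}=I$, and your appeal to the implicit function theorem supplies the differentiability that the original simply assumes.

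Your caveat that the identity holds at the exact minimizer $\tilde\omega^b(\omega)$, not at the gradient-descent iterate, is well taken. Note, however, that isolation of a nondegenerate critical point is not enough to make the argmin follow the implicit-function branch; what does this is uniqueness of the inner minimizer, guaranteed for example by strong convexity when $\gamma>H$ as in Assumption~\ref{ass:reg}.
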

	
	Combining \eqref{eqn:meta_gradient} and \eqref{eqn:Qb}, we obtain
	\begin{equation}\label{eqn:gradient}
		\nabla_\omega \widehat{\ell}(\mathcal{D}^b_{\text{test}}; \omega^b(\omega)) = (Q^b)^{-1} P^b.
	\end{equation}
	Since \( Q^b \) depends only on \( \omega^b(\omega) \), Lemma~\ref{lmm:implicit} allows efficient computation of \( \frac{d \omega^b(\omega)}{d \omega} \) without storing the entire optimization path.  
	However, obtaining the exact solution $\omega^b(\omega)$ of the inner problem \eqref{eqn:inner_imaml} requires full convergence, which is often impractical. In practice, we approximate \( \omega^b(\omega) \) using a few steps of gradient descent \eqref{eqn:inner_gd}.
	
		In practice, explicitly computing the inverse of \( Q^b \) can also be costly, especially in large-scale systems. To avoid matrix inversion, one can directly find \( (Q^b)^{-1}P^b \) 
		by solving the following optimization problem:
		\begin{equation}\label{eqn:cg}
			g^b :=\argmin_{\phi} \; \phi^T Q^b \phi - \phi^T P^b.
		\end{equation}
		Thus, $(Q^b)^{-1}P^b$ can be efficiently approximated by finding $g^b$ using an iterative method. 
	
	\begin{remark}\label{rmk:maml}
		Besides iMAML used here, another widely used meta-learning algorithm is MAML \cite{finn2017model}. Instead of \eqref{eqn:inner_imaml}, the inner loop of MAML solves
		\begin{equation}\label{eqn:inner_maml}
			\omega^b(\omega) := \argmin_{\psi} \; \widehat{\ell}(\mathcal{D}^b_{\text{tr}}; \psi).
		\end{equation}
		This formulation omits the regularization term in \eqref{eqn:inner_imaml}, which makes Lemma~\ref{lmm:implicit} inapplicable. As a result, the inner loop must compute and propagate derivatives of the training loss along the full optimization trajectory. To mitigate the high complexity, MAML employs a first-order approximation. However, these approximations can oversimplify the problem, potentially degrading the learning quality, as we will demonstrate through simulations in Section~\ref{sec:sim}.
		
	\end{remark}

\revise{
       Note that traditional offline learning approaches are often limited by structural assumptions. However, this limitation does not apply to the offline meta-training method proposed in this section. Specifically, our approach does not rely on offline learning to directly solve the target task; instead, it is embedded within a meta-learning framework to learn a transferable prior that facilitates rapid adaptation to new structural scenarios. By training on a distribution of tasks with varying system dynamics and structural characteristics, the learned meta-parameters capture common structures across scenarios. When applied to a target system, the method performs online adaptation using only a small amount of target data, significantly reducing data collection costs while maintaining flexibility to structural variations.
 }

	\section{Meta-Learning for Control}\label{sec:indirect}
	To achieve the control objective, we develop both indirect and direct data-driven methods within the meta-learning framework introduced in Section~\ref{sec:meta_learning}. The key difference between these approaches lies in whether an explicit system model is required, which directly influences the design of the loss functions for the base-learner and meta-learner. Particularly, we propose specific formulations of $\widehat{\ell}(\cdot)$ for each approach to achieve the desired objective \eqref{eqn:objective}. These algorithms highlight the generalizability of our framework, demonstrating that various controllers can be seamlessly integrated.

	\subsection{Indirect meta-learning for control}\label{sec:mpc}
	In the indirect data-driven approach, a common system model is learned during the meta-training phase. During meta-adaptation, the model is rapidly adapted using target data to approximate the target system dynamics, after which a model-based controller is applied to compute the optimal control.
	
	\subsubsection{Preliminaries: NSSM and MPC}
	To identify the system dynamics in~\eqref{eqn:sys}, various representations can be employed, such as neural differential equations and Hammerstein-Wiener models \cite{wills2013identification,chen2018neural}. Although our framework is agnostic to the choice of representation, we illustrate the controller design by identifying the system with an NSSM, which extends traditional state-space models by leveraging neural networks to capture nonlinearity: 
	\begin{subequations}\label{eqn:SSM}
		\begin{align}
			z_t &= f_{\text{enc}}(U_{t-H+1:t}, Y_{t-H+1:t}), \label{eqn:z}\\
			z_{t+1} &= A_z z_t + B_z u_{t+1}, \label{eqn:z_next}\\
			\hat{y}_t &= C_z z_t, \label{eqn:haty}
		\end{align}
	\end{subequations}
	where $z_t \in \mathbb{R}^{n_z}$ is a latent state encoded from past inputs and outputs:
	\begin{equation}
		\begin{split}
			U_{t-H+1:t}&:= \{u_{t-H+1},u_{u-H+2},\cdots,u_{t}\},\\
			Y_{t-H+1:t}&:= \{y_{t-H+1},y_{t-H+2},\cdots,y_{t}\}.
		\end{split}
	\end{equation}
	\revise{Here, $n_z$ and $H$ are user-defined design parameters. Since the encoder $f_{\text{enc}}$ maps a history of length $H$ to the latent state $z_t$, the dimension of $z_t$ must be sufficient to encode all relevant information from the input-output history.
	A practical rule of thumb is:
	$
	n_z \lesssim mH,
	$
	where $m$ is the output dimension. This ensures that the latent state can capture the essential dynamics contained in the past $H$ observations without being excessively large. Note that in \eqref{eqn:SSM}, $f_{\text{enc}}$ captures the system nonlinearity by fixing the initial condition of a linear system using past observations. The linear dynamics~\eqref{eqn:z_next}--\eqref{eqn:haty} then model temporal evolution.}
	
	Training NSSMs involves optimizing parameters $\omega := \{f_{\text{enc}}, A_z, B_z, C_z\}$ to minimize the identification error. Given a dataset of length $H+T$, the procedure is:
	\begin{enumerate}
		\item Construct a dataset $\mathcal{D}$ from input-output trajectories of length $H+T$:
		\begin{equation}\notag
			\begin{split}
				U_{t-H+1:t+T}&:= \{u_{t-H+1},u_{u-H+2},\cdots,u_{t+T}\},\\
				Y_{t-H+1:t+T}&:= \{y_{t-H+1},y_{t-H+2},\cdots,y_{t+T}\}.
			\end{split}
		\end{equation}
		\item Use past data $U_{t-H+1:t}$ and $Y_{t-H+1:t}$ to estimate the latent state $z_t$ via~\eqref{eqn:z}.
		\item Predict future outputs $\hat{Y}_{t+1:t+T} = \{\hat{y}_{t+1}, \dots, \hat{y}_{t+T}\}$ recursively using~\eqref{eqn:z_next} and~\eqref{eqn:haty}.
		\item Evaluate the identification accuracy using the loss:
		\begin{equation}\label{eqn:lssm}
			\ell_{\text{SSM}}(\mathcal{D}; \omega) = \frac{1}{T} ||Y_{t+1:t+T} - \hat{Y}_{t+1:t+T}||^2.
		\end{equation}
	\end{enumerate}
	
	Since the NSSM captures the input-output dynamics of the system, it enables the design of model-based controllers. To this end, we rewrite~\eqref{eqn:z_next} and~\eqref{eqn:haty} in compact form:
	\begin{equation}\label{eqn:compact}
		s_{t+1} = A s_t + B u_{t+1},
	\end{equation}
	where
	\[
	s_{t+1} := \begin{bmatrix} z_{t+1} \\ \hat{y}_{t+1} \end{bmatrix}, \quad
	A := \begin{bmatrix} A_z & 0 \\ C_z A_z & 0 \end{bmatrix}, \quad
	B := \begin{bmatrix} B_z \\ C_z B_z \end{bmatrix}.
	\]
	
	To track a reference trajectory, various controllers can be used; \revise{we focus on MPC for illustration due to the following reasons:   
\begin{itemize}
    \item The NSSM structure in results in linear latent dynamics \eqref{eqn:compact}, which ensures that the MPC formulation remains convex and computationally tractable.
    \item MPC naturally accommodates input constraints and tracking objectives.
    \item The receding-horizon mechanism of MPC improves robustness against modeling inaccuracies, which is particularly beneficial during early adaptation stages.
\end{itemize}
Therefore, MPC is an implementable controller that aligns with the identified NSSM model and practical control requirements.} It computes the control input by solving the optimization problem
	\begin{equation}\label{eqn:mpc}
		\begin{split}
			\min _u \;&\tilde{y}_{t+N \mid t}^T \cdot \mathbb{P} \cdot \tilde{y}_{t+N \mid t}\\&+\sum_{k=0}^{N-1} (\tilde{y}_{t+k \mid t}^T \cdot \mathbb{Q} \cdot \tilde{y}_{t+k \mid t}+\Delta u_{t+k \mid t}^T \cdot \mathbb{R} \cdot \Delta u_{t+k \mid t}) \\
			\text { s.t. } \; &\eqref{eqn:compact} \text{ and } \revise{u_{t+k \mid t} \in \mathcal{U}, \; \tilde{y}_{t+k \mid t} \in \mathcal{Y},}\; k=0,\ldots, N-1,
		\end{split}
	\end{equation}
	with
	$
	\tilde{y}_{t+k|t} := \hat{y}_{t+k|t} - \bar{y}_{t+k},  
	\Delta u_{t+k|t} := u_{t+k|t} - u_{t+k-1|t}.
	$
	Here, $\bar{y}_{t+k}$ is the reference signal, $\mathbb{Q} \succeq 0$ and $\mathbb{R} \succ 0$ are tuning weights. The matrix 
	$\mathbb{P}$ stabilizes the system's closed-loop performance and is typically chosen as the solution to a discrete-time algebraic Riccati equation \cite{garcia1989model}.

	\subsubsection{An MPC-based meta-learning control}
	To address limited data in the target system, we first pretrain an aggregated NSSM using source system data, then fine-tune it on the target system to identify its dynamics before applying MPC for reference tracking. Based on the meta-learning framework in Section~\ref{sec:meta_learning}, we now present the loss function design for both the base-learner and meta-learner.
	
	\textbf{Meta-training phase}:
	In the scenario of indirect data-driven control, the meta-training phase aims to identify a general model that can represent a group of nonlinear systems by using the source datasets. In the inner problem, the base-learner evaluates the identification performance on each single source system leading to the loss function
	\begin{equation}\label{eqn:inner_loss_mpc}
		\widehat{\ell}(\mathcal{D}^b_{\text{tr}}; \psi) = \ell_{\text{SSM}}(\mathcal{D}^b_{\text{tr}};\psi),
	\end{equation}
	where $\ell_{\text{SSM}}(\cdot)$ is given in \eqref{eqn:lssm}.
	
	In the outer problem, the meta-learner evaluates the identification performance across different source systems. It solves the problem \eqref{eqn:outer_prob}, where 
	\begin{equation}\label{eqn:out_loss_mpc}
		\ell(\mathcal{D}^b_{\text{test}}; \omega^b(\omega)) = \ell_{\mathrm{SSM}}(\mathcal{D}^b_{\text{test}}; \omega^b(\omega)).
	\end{equation}
	
	Finally, we update the source dataset. Specifically, solving the MPC problem~\eqref{eqn:mpc} yields the optimal input $u^b_*$, and to balance exploration and exploitation, we adopt the $\epsilon$-greedy strategy. Namely, for a given $\epsilon\in(0,1)$, we select the optimal action with probability $1-\epsilon$ and a random action with probability 
	$\epsilon$:
	\begin{equation}\label{eqn:ub}
		u^b = \begin{cases}
			u^b_*,  \text{ with probability } 1-\epsilon,\\
			\tilde{u}^b\sim \mathcal{N}(0,\Sigma^b), \text{ with probability } \epsilon.
		\end{cases}
	\end{equation}
	Applying $u^b$ to the source system~\eqref{eqn:sys} produces an output $y^b$, and the resulting trajectory $(u^b, y^b)$ is added to $\mathcal{D}_{\text{source}}$. Note that this phase requires only source data and can thus be conducted offline.

	\textbf{Meta-adaptation phase}:
	Upon convergence or reaching the maximum number of iterations, meta-training yields the aggregated NSSM with parameters denoted by $\omega$ that represent the shared similarities among source systems. In the meta-adaptation phase, $\omega$ is rapidly adapted to capture the target system's dynamics.
	
	We adapt the aggregated NSSM to the target system by performing an update similar to \eqref{eqn:inner_imaml}, using the target dataset $\mathcal{D}_{\text{target}}$:
	\begin{equation}\label{eqn:infer}
		\omega^*:=\argmin_{\psi} \; \ell_{\mathrm{SSM}}(\mathcal{D}_{\text{target}}; \psi)+\frac{\gamma}{2}||\psi-\omega||^2.
	\end{equation}
	In practice, this is efficiently approximated with a few gradient steps, using only limited data:
	\begin{equation}\label{eqn:infer_gd}
		\omega^*\gets \omega^* -\beta_{\text {in }} \big(\nabla_\psi\ell_{\mathrm{SSM}}(\mathcal{D}_{\text{target}}; \psi)|_{\psi=\omega^*}+\gamma(\omega^*-\omega)).
	\end{equation}
	The adapted NSSM, parameterized by $\omega^*$, is then used for MPC-based reference tracking. Note that the accurate system identification is critical for indirect data-driven control. The complete procedure is summarized in Algorithm~\ref{alg:meta_training}. 
	
	\begin{algorithm}
		\small
		\caption{Indirect meta-learning-based control with NSSM and MPC implementation}\label{alg:meta_training}
		\begin{algorithmic}
			\REQUIRE $\omega \leftarrow$ randomly initialize parameters of the NSSM
			\REQUIRE $\mathcal{D}_{\text {source }},\mathcal{D}_{\text {target }} \leftarrow$ source dataset, target dataset 
			\REQUIRE Regularization strength $\gamma$, learning rates $\beta_{\text{in}},\beta_{\text{out}}$, the maximum numbers of iterations $K_{\text{train}}, K_{\text{adapt}}$ ($K_{\text{adapt}}\ll K_{\text{train}}$), and design parameters $H, n_z,\mathbb{Q}, \mathbb{R}, N$
			\ENSURE $\omega^{*} \leftarrow$ NSSM parameters obtained via identification on the target system
			\ENSURE $u_t^{*} \leftarrow$ optimal control for the target system
			\STATE \% Meta-training phase
			\STATE $k=0$
			\WHILE[outer-loop]{not converge and $k < K_{\text{train}}$} 
			\STATE Sample batch $\{\mD^b\}_{b=1}^B$ from $\mathcal{D}_{\text {source }}$ 
			\FOR[inner-loop]{$b=1:B$}
			\STATE Partition $\mD^b$ into $\mathcal{D}^b_{\text{tr}}$ and $\mathcal{D}^b_{\text{test}}$
			\STATE $\omega^b \leftarrow$ solve \eqref{eqn:inner_imaml} with loss function \eqref{eqn:inner_loss_mpc}
			\STATE $g^b \leftarrow$ solve \eqref{eqn:cg} with $Q^b, P^b$ defined by the loss function \eqref{eqn:out_loss_mpc}
			\STATE $u^b\leftarrow$ compute via \eqref{eqn:mpc} and \eqref{eqn:ub} based on the NSSM parameterized by $\omega^b$
			\STATE $y^b\leftarrow$ simulate the system \eqref{eqn:sys} using $u^b$ with $\theta=\theta^b$  
			\STATE Update $\mathcal{D}_{\text{source}}$ by adding $(u^b,y^b)$
			\ENDFOR
			\STATE $\omega \leftarrow \omega-\beta_{\text {out }} \frac{1}{B}\sum_{b=1}^Bg^b$
			\STATE $k\gets k+1$
			\ENDWHILE
			\STATE \% Meta-adaptation phase
			\WHILE{$k < K_{\text{adapt}}$} 
			\STATE $\omega^* \leftarrow$ update $\omega^*$ by \eqref{eqn:infer_gd}
			\ENDWHILE
			\STATE \% MPC to find the optimal control input
			\FOR{$t=1, 2,\cdots$}  
			\STATE $u_t^*\leftarrow $ solve MPC \eqref{eqn:mpc} using the NSSM parameterized by $\omega^*$
			\ENDFOR
		\end{algorithmic}
	\end{algorithm}

    \begin{remark}
    \revise{Stability guarantees are a fundamental issue in optimal control. Specifically, since the controller is designed based on a meta-learned model, the closed-loop performance depends directly on the quality of the learning stage. To clarify this relationship, our analysis establishes the following pathway:
\begin{enumerate}
    \item[(1)] The meta-training phase converges under standard smoothness and regularization assumptions and produces a well-defined shared initialization.
    \item[(2)] The subsequent task-specific adaptation in the meta-inference phase identifies the target system with a bounded learning error.
    \item[(3)] When the adapted model is embedded within an MPC framework, robust MPC theory guarantees input-to-state stability under bounded model mismatch.
\end{enumerate}
Therefore, stabilization directly follows from the combination of bounded learning error and standard robust MPC results.  As compared to supervised learning without pretraining stage, meta-learning reduces adaptation error, which in turn tightens stability and tracking bounds in closed-loop control, as will be observed from the simulation results in Section~\ref{sec:sim}. For readability, the detailed theoretical analysis is provided in Appendix~\ref{app:theory}. }
    \end{remark}
	
	%%%%%%%%%%%%%%%%%%%%%%%%%%%%%%%%%%%%%%%%%%%%%%%%%%%%%%%%%%%%%%%%%%%%%%
	\subsection{Direct meta-learning for control}\label{sec:direct}
    \revise{The method proposed in Section~\ref{sec:indirect} belongs to the category of indirect data-driven control. Such approaches are modular and naturally compatible with constraint handling through MPC. However, their performance may degrade due to model bias and identification errors. Moreover, the identified model may not be optimal for control design, for example due to mismatched uncertainty representations \cite{dorfler2022bridging}. Direct approaches aim to address these limitations by learning control policies directly from data in an end-to-end manner. Reinforcement learning (RL), as one of the most widely adopted end-to-end paradigms, is therefore introduced to provide a direct data-driven realization of the same optimal control objective when accurate system models are subject to significant uncertainty.}
    
    Here we adopt a standard RL algorithm, the deep Q-network (DQN) \cite{Mnih2015HumanlevelCT,gautier2022deep} as the base-learner. 
    \revise{The conceptual simplicity of DQN makes the integration with our meta-learning framework transparent, and it has well-established convergence properties under appropriate conditions \cite{Lisboa0Convergence}. Importantly, our framework is not limited to DQN. The bi-level optimization structure \eqref{eqn:inner_imaml} - \eqref{eqn:outer_prob} requires only that the loss function $\widehat\ell(\cdot)$ is  differentiable. More advanced RL algorithms, such as soft actor-critic methods \cite{haarnoja2018soft}, proximal policy optimization \cite{schulman2017proximal}, and model-based RL approaches \cite{m2023model}, can be integrated by defining $\widehat\ell(\cdot)$ as their policy gradient or value function losses.}

	\subsubsection{A DQN-based RL reformulation}
	
	To apply DQN, we reformulate the reference tracking task as an RL problem, \revise{where input and output constraints are naturally embedded in the Markov Decision Process (MDP) formulation by restricting the admissible state and action spaces.}
    During meta-training, a so-called Q-network is pretrained to capture the similarities across similar systems. In the meta-adaptation phase, the network is fine-tuned with limited target data to directly learn an optimal control policy. To achieve reference tracking, the objective is to minimize the discounted cumulative cost
	$
	\sum_{k=0}^{\infty} \gamma^t c(y_{t}, u_{t}), 
	$  
	where $\gamma \in (0,1)$ is the discount factor that balances immediate and future losses. Moreover, $c(y_t, u_{t})$ represents the stage cost at time $t$, defined as  
	\begin{equation}  
		c(y_t, u_{t}) := (y_t - \bar{y}_t)^T  \mathbb{Q}(y_t - \bar{y}_t) + \Delta u_t^T \mathbb{R} \Delta u_t 
	\end{equation}
	with $\Delta u_t:=u_{t}-u_{t-1}$.
	
	Let $\mu(y)$ be the action taken under $y$.
	We can define the action-value function $Q^\mu(y, u)$ that takes the current action into account:
	\begin{equation}
		\begin{split}
			&Q^\mu(y, u)=\sum_{t=0}^{\infty} \gamma^t c(y_{t},u_{t}),\\
			\text{where }   &y_0=y, u_1=u, u_t=\mu(y_t), t=1,...,\infty. 
		\end{split}
	\end{equation}
	The optimal action-value function over all possible actions is denoted as
	\begin{equation}
		Q^*(y, u) = \min_\mu Q^\mu(y, u),
	\end{equation}
	which follows the well-known Bellman equation \cite{Bertsekas:DPOC:Vol1}:
	\begin{equation}\label{eqn:Q*}
		Q^*(y, u)=c(y,u)+\min_{u'} \gamma Q^*\left(y', u'\right). 
	\end{equation}
	Here $y'$ denotes the next state of $y$, following the system dynamics~\eqref{eqn:sys} after applying the control input $u'$.
	If $Q^*$ is available we can compute the optimal policy by
	\[
	\mu(y)= \operatorname{argmin}_u Q^*(y, u). 
	\]
	However, since both $y$ and $u$ are real valued, computing $Q^*$ involves solving equation \eqref{eqn:Q*} in the infinite dimensional space of real-valued functions. As this is practically impossible, we restrict attention to a finitely parameterised class of functions $Q(\cdot, \cdot|\omega):\mathbb{R}^n\times \mathcal{C}\to \mathbb{R}$, where $\omega$ represents the adaptable parameter. In DQN, this is achieved using a neural network known as the Q-network. Suppose that a dataset $\mathcal{D} := \{U(T), Y(T)\}$, containing a $T$-length trajectory of system inputs and outputs, is available for evaluation. We first rearrange it as
	\[
	\mathcal{D} = \{(y^{(i)}, u^{(i)}, y'^{(i)})\}_{i=1}^T,
	\]
	that is, each data tuple consists of the output $y^{(i)}$, the input $u^{(i)}$, and the subsequent output $y'^{(i)}$.
	Inspired by \eqref{eqn:Q*}, the Q-network is trained by adjusting its weights $\omega$ to reduce the mean-squared Bellman error
	\begin{equation}
		\ell_{\text{DQN}}(\mD;\omega)=\frac{1}{2}\sum_{i=1}^T(\tilde{Q}(y^{(i)}, u^{(i)}|\omega_-)-Q(y^{(i)}, u^{(i)}|\omega))^2,
	\end{equation}
	where
	\begin{equation}\label{eqn:loss}
		\begin{split}
			&\tilde{Q}(y^{(i)}, u^{(i)}|\omega_-)=c(y^{(i)},u^{(i)})+\min_{u'} \gamma Q(y'^{(i)}, u'|\omega_-).
		\end{split}
	\end{equation}
	In this equation, the optimal target value $c(y^{(i)},u^{(i)})+\min_{u'} \gamma Q^*(y'^{(i)}, u')$ is substituted with the approximated target value $c(y^{(i)},u^{(i)})+\min_{u'} \gamma Q(y'^{(i)}, u'|\omega_-)$, with parameter $\omega_-$ from the previous iteration. At each stage of optimization, we hold the parameters from the previous iteration $\omega_-$ fixed when optimizing the loss function $\ell_{\text{DQN}}(\mD;\omega)$. 
	
	Differentiating the loss function \eqref{eqn:loss} with respect to the weights yields the following gradient:
	\begin{equation}\label{eqn:gradient_rl}
		\begin{split}
			&\nabla_{\omega} \ell_{\text{DQN}}(\mD;\omega)\\\!\!=&\!-\!\!\sum_{i=1}^T(\tilde{Q}(y^{(i)}, u^{(i)}|\omega_-)\!-\! Q(y^{(i)}, u^{(i)}|\omega)) \nabla_{\omega} Q(y^{(i)}, u^{(i)}|\omega).\\    
		\end{split}
	\end{equation}
	We then update $\omega$ by minimizing $\ell_{\text{DQN}}(\mD;\omega)$ using the gradient descent algorithm:
	\begin{equation}
		\omega  \leftarrow \omega -\alpha \nabla_\omega \ell_{\text{DQN}}(\mD;\omega).
	\end{equation}
	As suggested by \cite{Lisboa0Convergence}, the Q-network is further updated by
	\begin{equation}\label{eqn:omega-}
		\omega_- \leftarrow \beta \omega_- + (1-\beta) \omega,
	\end{equation}
	where $\beta\in [0.5, 1)$. 
	
	\begin{remark}\label{rmk:dqn}
		\iffalse
        Recall the system dynamics in \eqref{eqn:sys}. If $g(\cdot)$ is an injective function, the state $x$ can be uniquely determined by the output $y$, and the DQN yields the globally optimal solution. Otherwise, when the system \eqref{eqn:sys} is observable, one can stack the output vector
		$
		\mathcal{Y}_t = [ y_{t-n-1}, \ldots, y_t]
		$
		that fully determines the state $x_t$. For this case, $y$ in this subsection can be replaced by $\mathcal{Y}$ to obtain the global optimal solution. The remaining derivations and analysis follow analogously.
		\fi
        \revise{The convergence of DQN to the optimal action-value function $Q^*$ requires the Markov property: the information available at each decision point must fully determine the system state. For system \eqref{eqn:sys}, when $g(\cdot)$ is injective, the current output $y_t$ uniquely determines the state $x_t$, and we use $y_t$ as input to the Q-network. When $g(\cdot)$ is not injective, we use an output history $Y_t = [y_{t-n-1}, \ldots, y_t]$ as the Q-network input. For observable systems, this output history fully reconstructs the state, satisfying the Markov property. Under these conditions and standard RL assumptions (adequate exploration, appropriate learning rate schedules \cite{Lisboa0Convergence}), DQN provably converges to the optimal $Q^*$. Throughout this subsection, the derivations apply with $y$ replaced by $Y$ when the non-injective case is considered. Here, ``optimal'' indicates that the solution satisfies the Bellman equation \eqref{eqn:Q*}, and convergence can be achieved when the Q-network gradients are exactly known. Thus, practical neural network approximation may introduce errors.}
	\end{remark}

	\subsubsection{A DQN-based meta-learning control}
	To enable rapid adaptation in dynamic environments, we embed DQN within the proposed meta-learning framework. During the meta-training phase, a shared parameter $\omega$ is learned from different source systems using $\mathcal{D}_{\text{source}}$. Subsequently, the meta-adaptation phase fine-tunes $\omega$ to determine the optimal control policy for the target system, as specified below.
	
	\textbf{Meta-training phase}:
	By using DQN, the base-learner for each source system, labeled as $b$, solves the inner problem \eqref{eqn:inner_imaml} to yield $\omega^b(\omega)$ with
	\begin{equation}\label{eqn:inner_loss_rl}
		\widehat{\ell}(\mathcal{D}^b_{\text{tr}}; \psi) = \ell_{\text{DQN}}(\mathcal{D}^b_{\text{tr}};\psi).
	\end{equation}
	On the other hand, in the outer problem, the meta-learner solves \eqref{eqn:outer_prob}, where
	\begin{equation}\label{eqn:out_loss_rl}
		\ell(\mathcal{D}^b_{\text{test}}; \omega^b(\omega)) = \ell_{\text{DQN}}(\mathcal{D}^b_{\text{test}}; \omega^b(\omega)).
	\end{equation}
	As shown in \eqref{eqn:inner_gd} and \eqref{eqn:pb}, this requires evaluating the gradient \eqref{eqn:gradient_rl}.
	
	Similar to \eqref{eqn:ub}, in the meta-training phase, the control input $u^b$ is selected using the $\epsilon$-greedy policy that balances the exploration and exploitation:
	\begin{equation}\label{eqn:ub_dqn}
		u^b = \begin{cases}
			\operatorname{argmin}_u Q(y^b, u,\omega^b(\omega)),  \text{ with probability } 1-\epsilon,\\
			\tilde{u}^b\sim \mathcal{N}(0,\Sigma^b), \text{ with probability } \epsilon.
		\end{cases}
	\end{equation} That is, the $\epsilon$-greedy policy generates new sample data based on latest $\omega^b$.

	\begin{algorithm}
		\small
		\caption{Direct meta-learning-based control with  DQN implementation}\label{alg:meta_training_RL}
		\begin{algorithmic}
			
			\REQUIRE $\omega \leftarrow$ randomly initialize the weights to parameterize the optimal action-value function
			\REQUIRE  $\mathcal{D}_{\text {source }},\mathcal{D}_{\text {target }} \leftarrow$ source dataset, target dataset 
			\REQUIRE Regularization strength $\gamma$, learning rates $\beta_{\text{in}},\beta_{\text{out}}$, the maximum numbers of iterations $K_{\text{train}}, K_{\text{adapt}}$ ($K_{\text{adapt}}\ll K_{\text{train}}$), and design parameters $\mathbb{Q}, \mathbb{R},\alpha,\beta,\gamma$
			\ENSURE $\omega^{*} \leftarrow$ weights of the optimal action-value function for the target system
			\ENSURE $u_t^{*} \leftarrow$ optimal control for the target system
			\STATE \% Meta-training phase
			\STATE $k=0$
			\WHILE[outer-loop]{not converge and $k < K_{\text{train}}$} 
			\STATE Sample batch $\{\mD^b\}_{b=1}^B$ from $\mathcal{D}_{\text {source }}$ 
			\FOR[inner-loop]{$b=1:B$}
			\STATE Partition $\mD^b$ into $\mathcal{D}^b_{\text{tr}}$ and $\mathcal{D}^b_{\text{test}}$
			\STATE $\omega^b \leftarrow$ solve \eqref{eqn:inner_imaml} with loss function 
			\eqref{eqn:inner_loss_rl}
			\STATE $\omega_{-}^b \leftarrow$ update $\omega_{-}^b$ via \eqref{eqn:omega-}
			\STATE $g^b \leftarrow$ solve \eqref{eqn:cg} with $Q^b,P^b$ defined by the loss function \eqref{eqn:out_loss_rl}
			\STATE $u^b\leftarrow$ choose the new input by \eqref{eqn:ub_dqn}
			\STATE $y^b\leftarrow$ simulate the system \eqref{eqn:sys} using $u^b$ with $\theta=\theta^b$  
			\STATE Update $\mathcal{D}_{\text{source}}$ by adding $(u^b,y^b)$
			\ENDFOR
			\STATE $\omega \leftarrow \omega-\beta_{\text {out }} \frac{1}{B}\sum_{b=1}^Bg^b$
			\ENDWHILE
			\STATE \% Meta-adaptation phase
			\WHILE{$k < K_{\text{adapt}}$} 
			\STATE $\omega^* \leftarrow$ update $\omega^*$ by \eqref{eqn:infer_rl_gd}
			\ENDWHILE
			\STATE \% Directly find the control input
			\FOR{$t=1, 2,\cdots$}  
			\STATE $u_t^*\leftarrow $ solve \eqref{eqn:control_rl}
			\ENDFOR
		\end{algorithmic}
	\end{algorithm}

	\textbf{Meta-adaptation phase}:
	Once the meta-training phase converges or reaches its iteration limit, $\omega$ is learned as a general approximation of action-value functions across source systems. Similar to \eqref{eqn:infer_gd}, in the meta-adaptation phase, $\omega$ is fine-tuned using a small amount of data from the target system:
	\begin{equation}\label{eqn:infer_rl_gd}
		\omega^*\gets \omega^* -\beta_{\text {in }} \big(\nabla_\psi\ell_{\mathrm{DQN}}(\mathcal{D}_{\text{target}}; \psi)|_{\psi=\omega^*}+\gamma(\omega^*-\omega)).
	\end{equation}
	Then, during the real-time operation, the optimal control input is directly obtained by solving
	\begin{equation}
		u_t^*=\operatorname{argmin}_u Q(y_t, u,\omega^*). \label{eqn:control_rl}  
	\end{equation}

	The full procedure is summarized in Algorithm~\ref{alg:meta_training_RL}. Compared with Algorithm~\ref{alg:meta_training}, Algorithm~\ref{alg:meta_training_RL} does not aim to identify the system model. Instead, it implicitly learns the control policy by optimizing the action-value functions.
	
	As shown in this section, the framework proposed in Section~\ref{sec:meta_learning} can accommodate both indirect and direct learning algorithms for controller design. \revise{
From a structural perspective, indirect methods parameterize the system model and subsequently perform model-based control design, whereas direct methods parameterize the control policy (or value function) and optimize it directly from data.
From an optimization viewpoint, both approaches solve the same bi-level meta-learning problem. The difference lies in the choice of decision variables (model parameters versus policy parameters) and in how approximation errors propagate (model bias versus policy approximation bias). The motivation for studying both paradigms is therefore to demonstrate that the proposed framework is sufficiently general to accommodate both realizations under a unified optimization structure.    
} It should also be emphasized that the applicability of the framework is not limited to the examples discussed in Sections~\ref{sec:mpc} and \ref{sec:direct}.

	\section{Simulation and Experiment}\label{sec:sim}
	In this section, we will illustrate the performance of our proposed algorithms by using both numerical simulations and experiments.
	
	\subsection{Numerical verification of Algorithm~\ref{alg:meta_training}}
	We first present numerical examples to evaluate the performance of Algorithm~\ref{alg:meta_training}.
	Let us consider a family of van der Pol oscillators whose dynamics are given by
	\begin{equation}
		\begin{split}
			\dot{x}_1&=x_2, \;
			\dot{x}_2=\theta x_2\left(1-x_1^2\right)-x_1+u, \\
			y&=x,
		\end{split}
	\end{equation}
	where $\theta\sim \mathcal{N}(0,1)$ is the unknown damping ratio. 
	% We set the latent state dimension in \eqref{eqn:SSM} to $n_z = 5$. The encoder $f_{\text{enc}}$ is a neural network with one input layer, one hidden layer, and one output layer, each with $128$ ReLU-activated neurons. The state-space matrices $A_z$, $B_z$, and $C_z$ are randomly initialized. All experiments are repeated with $10$ different random seeds.

	\revise{We compare Algorithm~\ref{alg:meta_training} with a few baselines, all of which have the same NSSM architecture as detailed in \cite{yan2024mpc}: 
	\begin{enumerate}
		\item \textbf{iMAML (Our approach)}: The NSSM parameters are pretrained on the source data $\mathcal{D}_{\text{source}}$ and subsequently adapted to the target data $\mathcal{D}_{\text{target}}$ using Algorithm~\ref{alg:meta_training}.
		\item \textbf{MAML}: \textbf{MAML} also employs a meta-learning framework. However, as noted in Remark~\ref{rmk:maml}, it solves a different bi-level optimization problem from \textbf{iMAML} and applies a first-order approximation.
		\item \textbf{Supervised learning}: A standard learning approach without pretraining. NSSM parameters are initialized randomly and trained solely on target data $\mathcal{D}_{\text{target}}$. 
	\end{enumerate}}
    \revise{Note that in both \textbf{iMAML} and \textbf{MAML}, $10$ source systems are generated by sampling $\theta^b \sim \mathcal{N}(0,1)$. The dataset of each source system contains a trajectory of length $10000$.}

    \revise{After pretraining (\textbf{iMAML} and \textbf{MAML}) or randomly initialization (\textbf{Supervised learning}), we adapt the NSSMs using the target dataset $\mathcal{D}_{\text{target}}$ containing $300$ data points. Fig.~\ref{fig:control} illustrates the tracking performance on the target system. The goal is to track a circular trajectory using MPC with NSSMs adapted after $0$, $100$, and $3000$ iterations of \eqref{eqn:infer_gd}, respectively. From the first column, we conclude that after the meta-training phase, both \textbf{iMAML} and \textbf{MAML} produce good aggregated models that capture the dynamics of the target system as evidenced by their ability to follow the overall trend of the reference trajectory from the start. The performance is further improved after the adaptation steps using target data $\mD_{\text{target}}$. Notably, \textbf{iMAML} shows faster convergence and superior tracking, particularly during early meta-adaptation. This is attributed to its more accurate computation of meta-gradient than \textbf{MAML} (see Remark~\ref{rmk:maml}), which leads to better initialization and more effective adaptation. In contrast, since \textbf{supervised learning} does not benefit from the source systems, it performs poorly at the beginning and converges slower. }
	
	% \begin{figure}[!tb]
	% 	\centering
	% 	\includegraphics[width=0.8\linewidth]{  log_loss2}
	% 	\caption{\textcolor{teal}{TO BE REMOVED}Comparison of prediction performance on the target system. The lines and shaded regions respectively represent the mean and standard deviation across $10$ runs.}\label{fig:loss_compare_infer}
	% \end{figure}

	\begin{figure*}[!tb]
		\centering
		\subfigure[The iMAML-based MPC (Algorithm~\ref{alg:meta_training}).]{
			\begin{minipage}[b]{0.3\textwidth}
				\includegraphics[width=1\textwidth]{  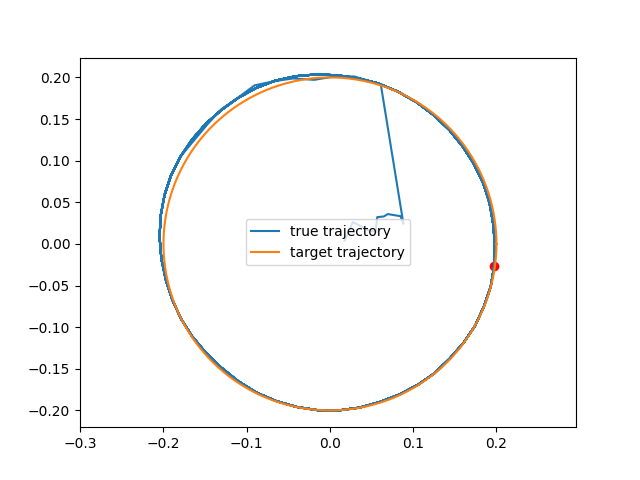} 
			\end{minipage}
			\begin{minipage}[b]{0.3\textwidth}
				\includegraphics[width=1\textwidth]{  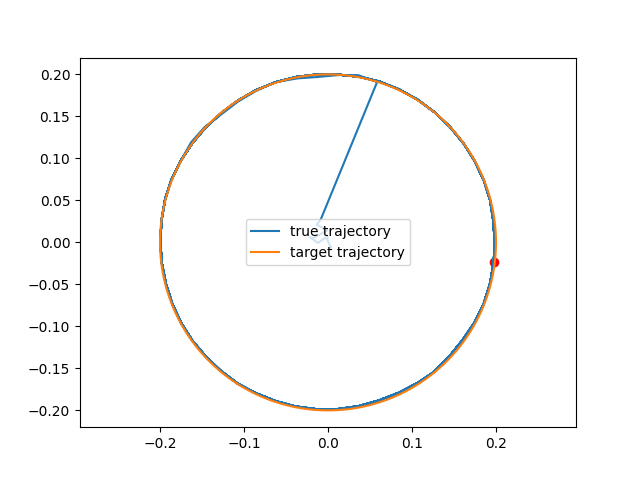}
			\end{minipage}
			\begin{minipage}[b]{0.3\textwidth}
				\includegraphics[width=1\textwidth]{  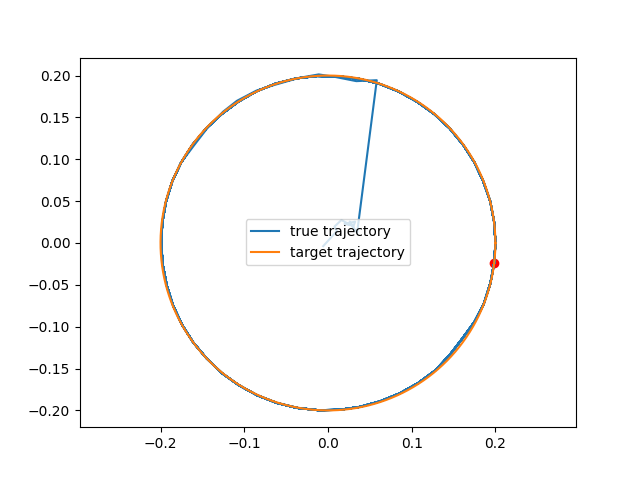}
			\end{minipage}
		}
		\subfigure[The MAML-based MPC.]{
			\begin{minipage}[b]{0.3\textwidth}
				\includegraphics[width=1\textwidth]{  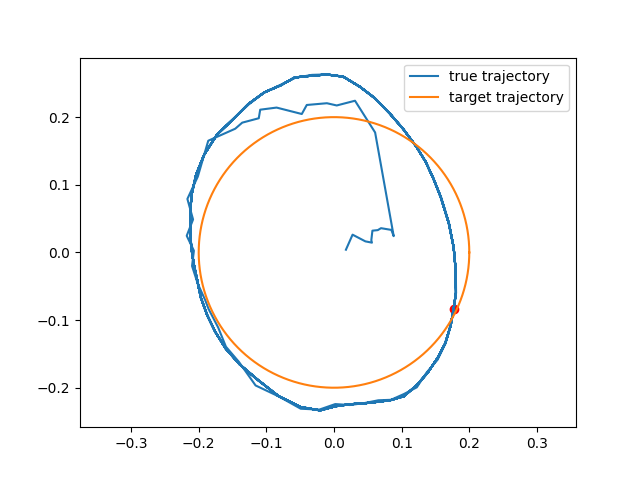} 
			\end{minipage}
			\begin{minipage}[b]{0.3\textwidth}
				\includegraphics[width=1\textwidth]{  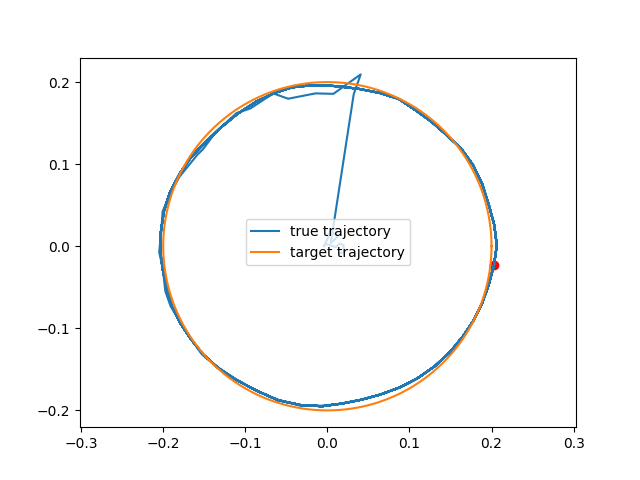}
			\end{minipage}
			\begin{minipage}[b]{0.3\textwidth}
				\includegraphics[width=1\textwidth]{  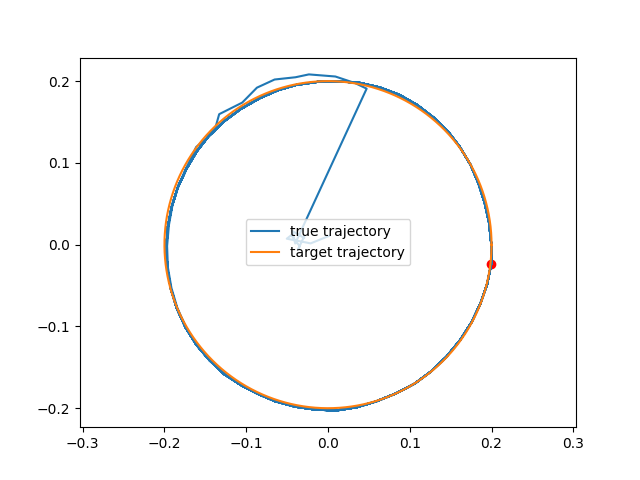}
			\end{minipage}
		}
		\subfigure[The supervised learning-based MPC.]{
			\begin{minipage}[b]{0.3\textwidth}
				\includegraphics[width=1\textwidth]{  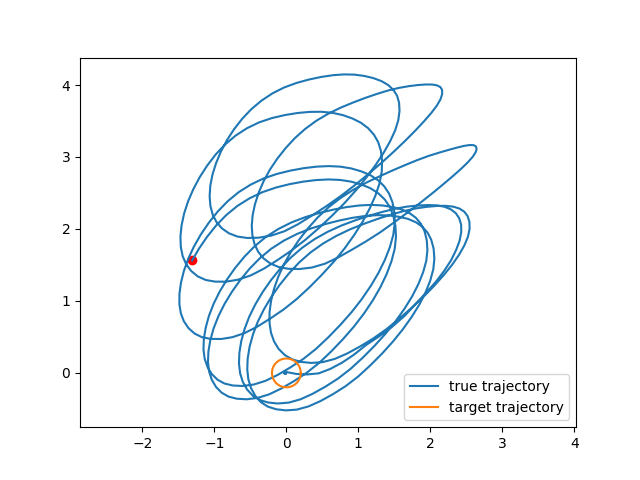} 
			\end{minipage}
			\begin{minipage}[b]{0.3\textwidth}
				\includegraphics[width=1\textwidth]{  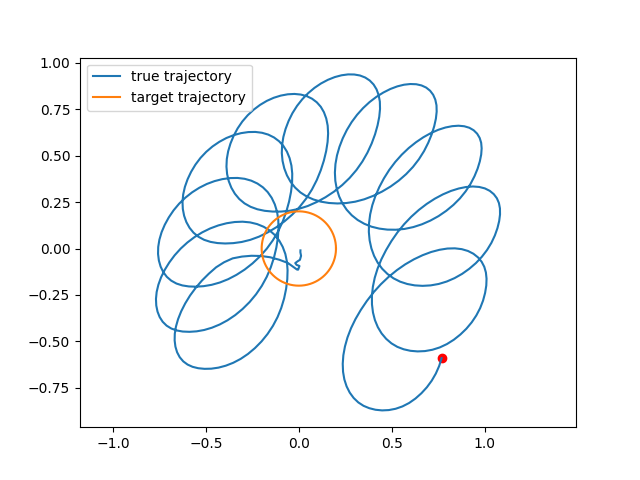}
			\end{minipage}
			\begin{minipage}[b]{0.3\textwidth}
				\includegraphics[width=1\textwidth]{  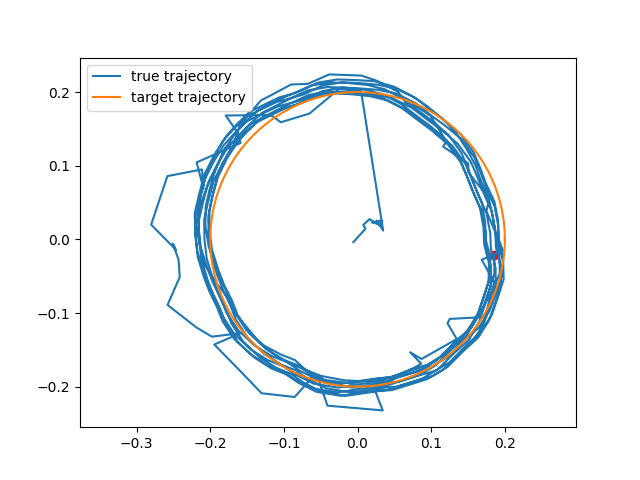}
			\end{minipage}
		}
		\caption{\revise{Comparison of the tracking performance on the target system, where the three columns show the performance of different algorithms by using the NSSMs adapted after $10$, $100$, and $3000$ steps, respectively. }}\label{fig:control}
	\end{figure*}

	\subsection{Experimental verification of Algorithm~\ref{alg:meta_training_RL}}
	To evaluate the performance of the direct meta-learning-based controller, we test Algorithm~\ref{alg:meta_training_RL} on a physical ball-on-a-plate system, as shown in Fig.~\ref{fig:ball}. The system comprises a square plate, actuated by two servo motors that control the plate’s rotation about the 
	$x$- and $y$-axes. A ball rests in the plate and can be moved by tilting the plate correspondingly. A camera mounted over the plate can be used to track the ball position. The control loop is closed through a raspberry pi board.

	\begin{figure}[!tb]
		\centering
		\includegraphics[width=\linewidth]{   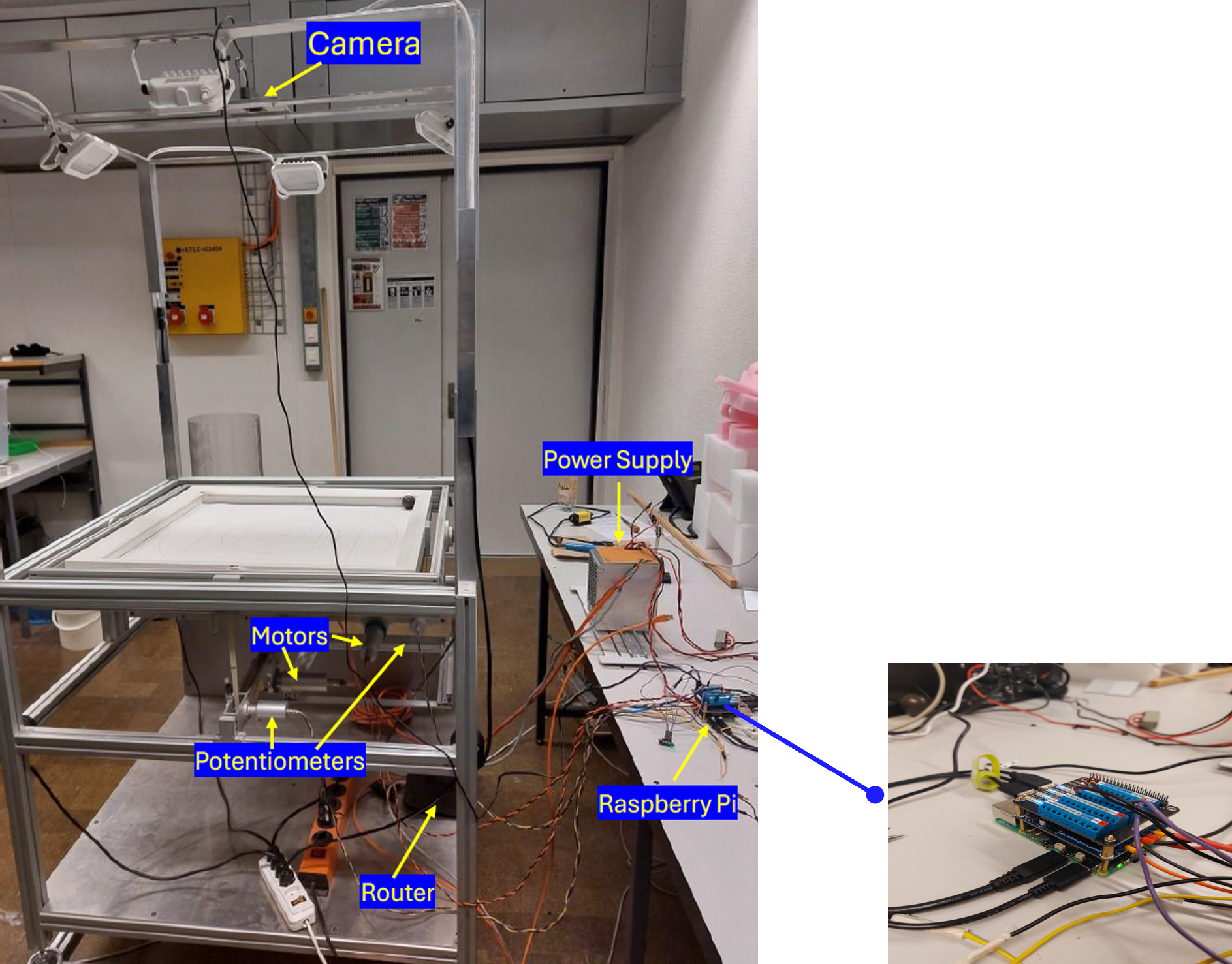}
		\caption{The ball-on-a-plate system \cite{Simon}.} \label{fig:ball}
	\end{figure}
	
	The system is nonlinear with uncertainties introduced by friction. The dynamics can be described by \cite{Waldvogel} and \cite{Simon}:
	\begin{equation}
		\begin{gathered}
			\ddot{x}=\frac{5}{7} g \sin \beta \cos \alpha+\frac{F_{\text {friciton},x}}{m}, \\
			\ddot{y}=-\frac{5}{7} g \sin \alpha+\frac{F_{\text {friciton},y}}{m},
		\end{gathered}  
	\end{equation}
	where $x$ and $y$ represent the ball's position along the $x$- and $y$-axes; $\alpha$ and $\beta$ denote the plate angles of rotation about the $x$- and $y$-axes; $F_{\text{friction},x}$ and $F_{\text{friction},y}$ are the frictional forces projected along the $x$- and $y$-directions; and $m$ is the mass of the ball.

	We adopt the classical Stribeck curve as the friction model, following the formulation in \cite{Waldvogel}. The mathematical expression is given by
	$$
	F_{\text {friciton }}=\left(F_C+\left(F_S-F_C\right) e^{-\left(\frac{||v||}{v_S}\right)^{\delta_S}}\right) \operatorname{sgn}(v)+F_v v,
	$$
	where $F_C$ is the kinetic friction coefficient, $F_S$ is the static friction coefficient, $F_v$ denotes the viscous friction coefficient, $v_S$ represents the Stribeck velocity, $\delta_S$ depends on the contact surface geometry, and $v = [\dot{x}_s, \dot{y}_s, 0]^T$ is the ball's velocity. A graphical illustration of these parameters is provided in Fig.~\ref{fig:friction}. 
	
	Since the true friction parameters are unknown and collecting data from the real system is challenging, we create $10$ source systems in simulation. These systems share the same model as the real system but have different, unknown friction parameters.
	
	\begin{figure}[!tb]
		\centering
		\includegraphics[width=0.6\linewidth]{   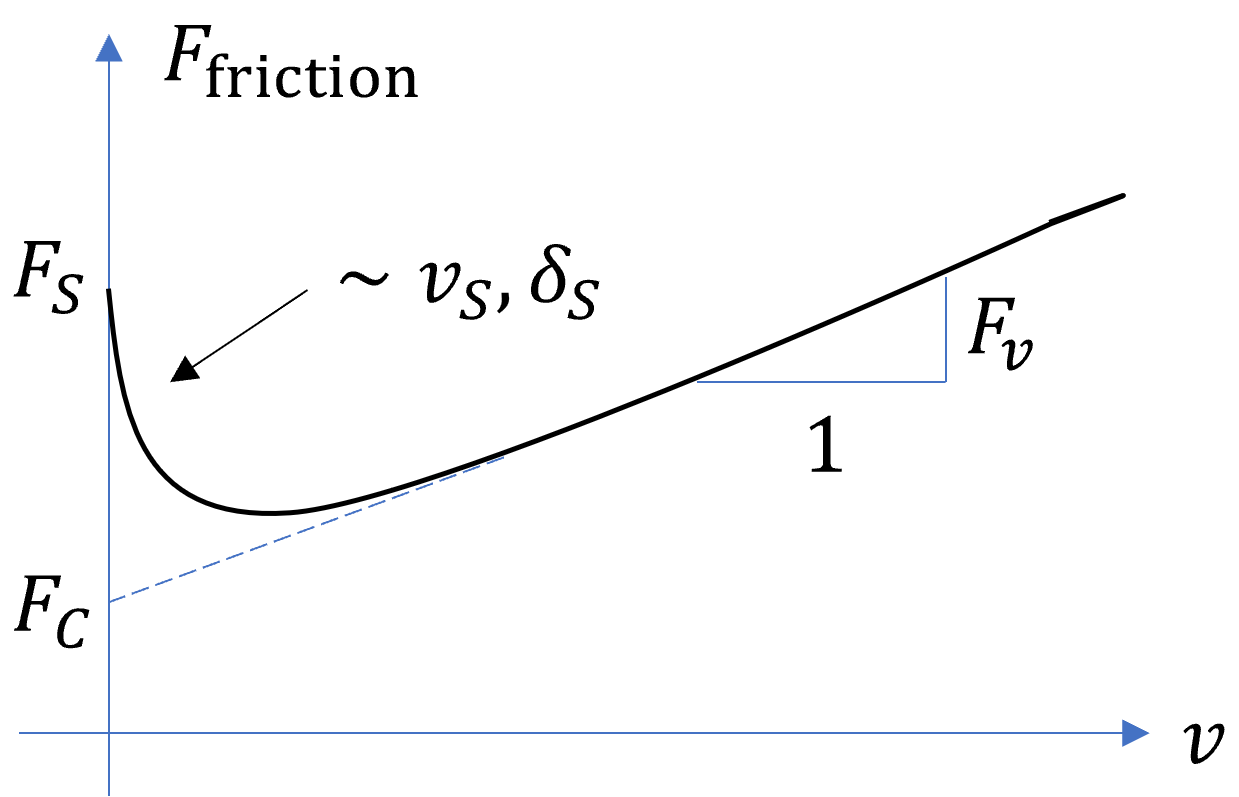}
		\caption{Velocity-dependent friction model.}
		\label{fig:friction}
	\end{figure}
	
	To evaluate the effectiveness of our algorithms, we consider two scenarios:
	\begin{enumerate}
		\item The Q-network is first pretrained in simulation using data from source systems and subsequently adapted to the real physical system using Algorithm~\ref{alg:meta_training_RL}.
		\item A DQN is trained directly on the real system with randomly initialized weights.
	\end{enumerate}
	The learning algorithms compute and send the desired plate angles in the $x$- and $y$-directions based on the ball’s current state. These angles serve as control inputs that ultimately adjust the ball’s position accordingly.
	
	For exploration, in addition to the $\epsilon$-greedy policy used in Algorithm~\ref{alg:meta_training_RL}, we introduce an ``input intermezzo" mechanism. That is, after a fixed operating period, we intentionally override the current policy by commanding a predefined plate angle based on the ball's reference position. The input intermezzo is used to repeatedly reset the ball to positions, which are otherwise rarely attained under $\epsilon$-greedy actions paired with a well-trained tracking controller. This, in particular, includes collisions with the plate boundary. The input intermezzo thereby empirically achieved convergence speed.

	Figures~\ref{fig:ball_imaml} and \ref{fig:sl} compare the tracking performance on the real system with and without pretraining. The plots show the reference signal, which follows a square trajectory (green line), alongside the actual position of the ball (blue line). The short intervals between the dotted red lines represent input intermezzos used for exploration. It is evident that pretraining the network weights using Algorithm~\ref{alg:meta_training_RL} offers a significant advantage: the overall trend of the reference trajectory is captured from the very beginning (Fig.~\ref{fig:ball_imaml}). Moreover, over the 
	$150$-second experiment, the pretrained weights lead to a 
	$73.58\%$ reduction in mean cost compared to direct deployment on the real system without any pretraining (Fig.~\ref{fig:sl}). This confirms that the meta-learning framework enhances tracking performance in real-world applications, highlighting its effectiveness for sim-to-real transfer. Moreover, while oscillations are observable in the results, these are expected to diminish with more training data.

	\begin{figure}[!tb]
		\centering
		\includegraphics[width=\linewidth]{   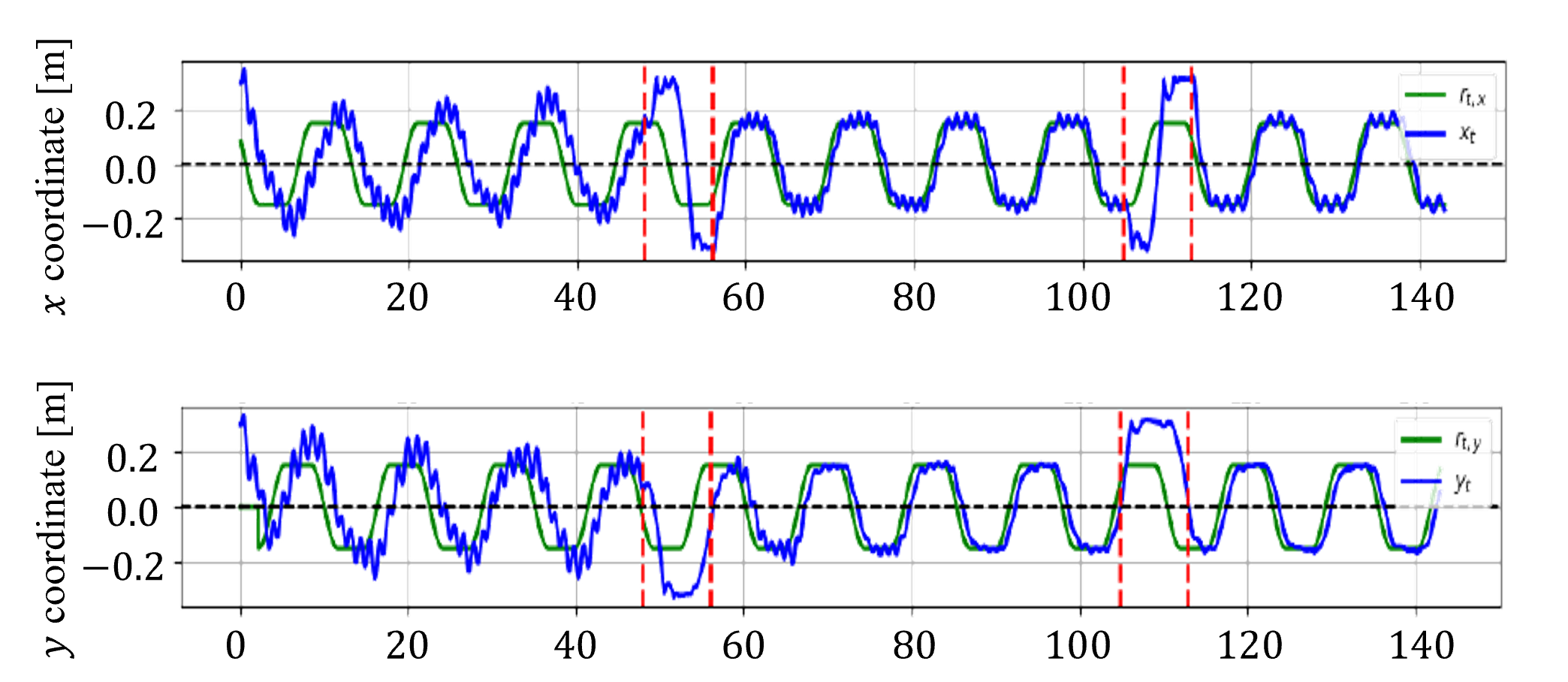}
		\caption{Experiment results on the real system after pre-training with Algorithm~\ref{alg:meta_training_RL}. The red dashed lines indicate the input intermezzo.}\label{fig:ball_imaml}
	\end{figure}
	
	\begin{figure}[!tb]
		\centering
		\includegraphics[width=\linewidth]{   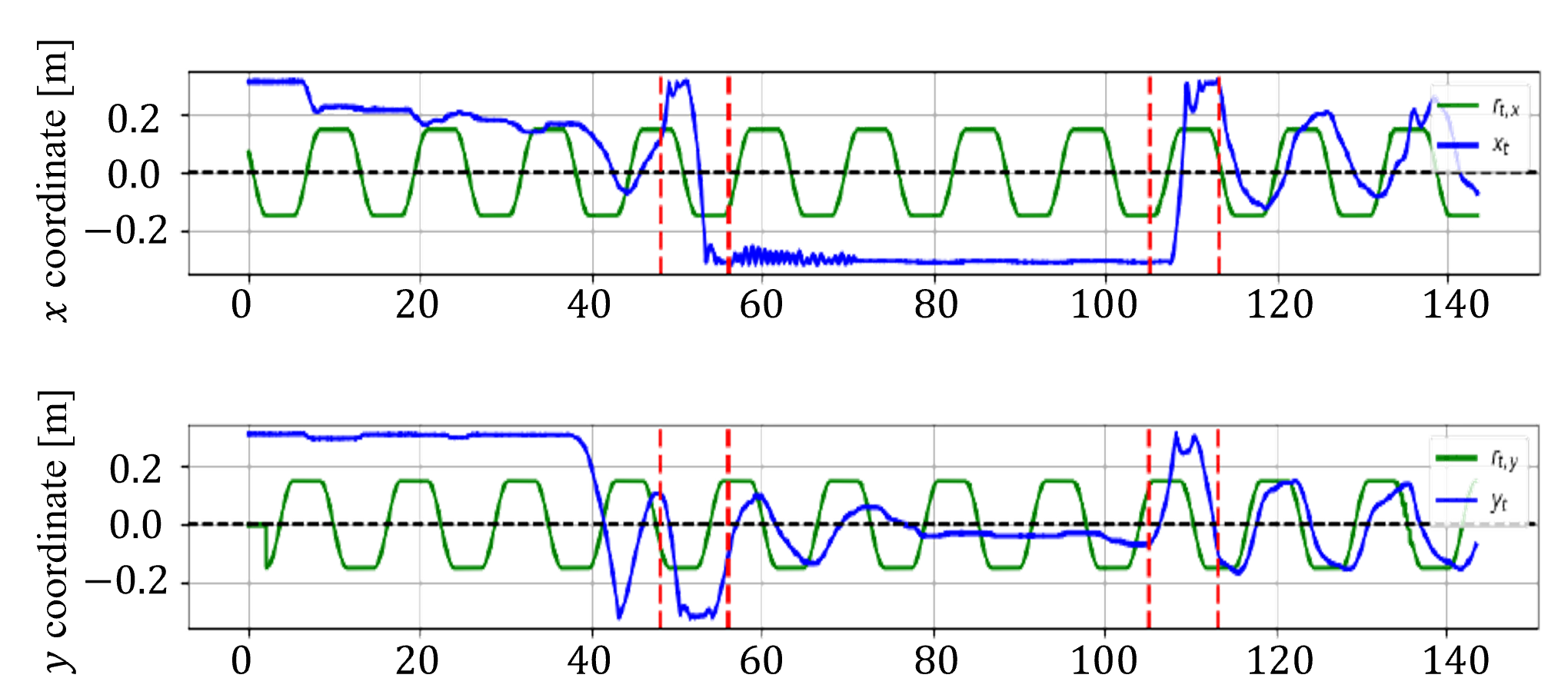}
		\caption{Experiment results of DQN by only using data from the real system.  The red dashed lines indicate the input intermezzo.}\label{fig:sl}
	\end{figure}
	
	%%%%%%%%%%%%%%%%%%%%%%%%%%%%%%%%%%%%%%%%%%%%%%%%%%%%%%%%%%%%%%%%%%%%%%%%%%%%%%%%%%%%%%%%
	\section{Conclusion}\label{sec:conclude}
	This paper proposes a meta-learning framework based on the iMAML algorithm for optimal control in uncertain nonlinear systems. By pretraining with offline source data and fine-tuning with (limited) online target data, the framework enables fast adaptation and improved control performance. We also introduce two algorithms that can integrated into our framework for controller design, utilizing MPC and deep Q-networks respectively. \revise{Theoretical analysis on convergence of the meta-learning framework and stability of the learning-based control algorithms is further given.} Numerical and hardware experiments finally confirm the superiority of our methods over baseline approaches. \revise{Future work will focus on improving the scalability of the proposed framework and extending its applicability to more complex, real-world robotic systems, such as legged robots and autonomous vehicles. Additionally, we will explore integration with state-of-the-art RL algorithms such as soft actor-critic and model-based planning methods to further improve sample efficiency and tracking performance.}

\appendices

{\color{black}
\section{Theoretical Additions}\label{app:theory}
In this section, we present additional theoretical results on the convergence of the proposed framework and the stability of the resulting control algorithms. To highlight the generality of the analysis, we adopt the notation used in the general framework and denote the loss function by $\widehat{\ell}(\cdot)$, which corresponds to $\ell_{\text{SSM}}(\cdot)$ and $\ell_{\text{DQN}}(\cdot)$ in Algorithms~\ref{alg:meta_training} and \ref{alg:meta_training_RL}, respectively.

\textbf{(1) Convergence of the meta-training phase}

In Section~\ref{sec:meta_learning}, we propose a meta-learning framework, within which 
	the inner problem \eqref{eqn:inner_imaml} and outer problem \eqref{eqn:outer_prob} are solved using iterative methods \eqref{eqn:inner_gd} and \eqref{eqn:outer}, respectively.
To show their convergence, we adopt the standard assumptions:
\begin{assumption}
\label{ass:loss}
For all $\mathcal{D}$ and $\psi$, the loss functions $\widehat{\ell}(\mathcal{D}; \psi)$ satisfy:
\begin{enumerate}
    \item[(i)] (\textbf{$L$-smoothness}) The function $\widehat{\ell}(\mathcal{D}; \psi)$ is $L$-smooth in $\psi$, i.e.,
    \[
    ||\nabla \widehat{\ell}(\mathcal{D}; \psi_1) - \nabla \widehat{\ell}(\mathcal{D}; \psi_2)||
    \le L ||\psi_1 - \psi_2||.
    \]

    \item[(ii)] (\textbf{Bounded Hessian}) The Hessian of $\widehat{\ell}(\mathcal{D}; \psi)$ is uniformly bounded, i.e.,
    \[
    ||\nabla^2 \widehat{\ell}(\mathcal{D}; \psi)|| \le H.
    \]
\end{enumerate}
\end{assumption}

\begin{assumption}
\label{ass:reg}
The regularization parameter satisfies $\gamma > H$.
\end{assumption}

Assumptions~\ref{ass:loss}--\ref{ass:reg} ensures the inner problem \eqref{eqn:inner_imaml} is $(\gamma-H)$-strongly convex, which guarantees: (i) linear convergence of gradient descent, as proved below, and (ii) invertibility of $Q^b = I + \frac{1}{\gamma}\nabla^2 \widehat{\ell} \succ 0$ in Lemma~\ref{lmm:implicit}, which is required for implicit differentiation.

The convergence of the meta-training phase is guaranteed by the following proposition:
\begin{proposition}[Meta-Training Convergence]
\label{prop:convergence}
Let Assumptions~\ref{ass:loss}--\ref{ass:reg} hold. 
With appropriate learning rate $\beta_{\mathrm{out}}$ and $M$ inner-loop gradient steps with step size $\beta_{\mathrm{in}} = 1/(L + \gamma)$, it follows that

%\[
%\todo{L(\omega_K)-L^* \leq \underbrace{\frac{C}{K}}_{\text{outer-loop rate}} + \underbrace{O\left(\left(1 - \frac{\gamma - H}{L + \gamma}\right)^M\right)}_{\text{inner-loop approximation error}},}
%\]

\[
\min_{k < K} ||\nabla L(\omega_k)||^2 \leq \underbrace{\frac{C}{K}}_{\text{outer-loop rate}} + \underbrace{O\left(\left(1 - \frac{\gamma - H}{L + \gamma}\right)^M\right)}_{\text{inner-loop approximation error}}
\]
where $C:=2L_{\mathrm{meta}}(L(\omega_0) - L^*)$ with the constant $L_{\mathrm{meta}}$ defined in the proof.
\end{proposition}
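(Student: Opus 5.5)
Under Assumptions~\ref{ass:loss}--\ref{ass:reg}, the plan is to run the standard analysis of nonconvex gradient descent with an inexact gradient oracle. The oracle is the outer update \eqref{eqn:outer} in which the exact meta-gradient $\nabla L(\omega)=\sum_b (\tilde Q^b)^{-1}\tilde P^b$ is replaced by $\sum_b (Q^b)^{-1}P^b$. Here $\tilde Q^b,\tilde P^b$ are evaluated at the exact inner minimizer $\tilde\omega^b(\omega)$, and $Q^b,P^b$ at the $M$-step iterate $\omega^b(\omega)$.

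\textbf{Step 1 (inner loop).} The inner objective $\psi\mapsto \widehat\ell(\mathcal{D}^b_{\mathrm{tr}};\psi)+\frac{\gamma}{2}\|\psi-\omega\|^2$ is $\mu$-strongly convex with $\mu=\gamma-H$, since its Hessian is $\nabla^2\widehat\ell+\gamma I\succeq(\gamma-H)I$. It is also $(L+\gamma)$-smooth. Gradient descent \eqref{eqn:inner_gd} with $\beta_{\mathrm{in}}=1/(L+\gamma)$ therefore satisfies
\[\|\omega^b(\omega)-\tilde\omega^b(\omega)\|^2\le \Big(1-\tfrac{\gamma-H}{L+\gamma}\Big)^M\|\omega-\tilde\omega^b(\omega)\|^2.\]
I will bound the initial distance uniformly by $\|\nabla\widehat\ell(\mathcal{D}^b_{\mathrm{tr}};\omega)\|/\mu$, which amounts to assuming bounded gradients on the iterates. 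This yields $\|\omega^b-\tilde\omega^b\|^2\le D^2\rho^M$ with $\rho:=1-\frac{\gamma-H}{L+\gamma}$.

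\textbf{Step 2 (meta-gradient error and smoothness of $L$).} Lemma~\ref{lmm:implicit} applies because $Q^b\succeq (1-H/\gamma)I\succ0$, which gives $\|(Q^b)^{-1}\|\le \gamma/(\gamma-H)$. To bound the error $e_k:=\widehat{\nabla L}(\omega_k)-\nabla L(\omega_k)$, I will use
\[(Q^b)^{-1}P^b-(\tilde Q^b)^{-1}\tilde P^b=(Q^b)^{-1}(P^b-\tilde P^b)+\big((Q^b)^{-1}-(\tilde Q^b)^{-1}\big)\tilde P^b.\]
The first term is controlled by $L$-smoothness: $\|P^b-\tilde P^b\|\le L\|\omega^b-\tilde\omega^b\|$. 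The second term needs $(Q^b)^{-1}-(\tilde Q^b)^{-1}=(Q^b)^{-1}(\tilde Q^b-Q^b)(\tilde Q^b)^{-1}$, so it requires Lipschitz continuity of the Hessian with some constant $\rho_H$, together with a bound $G$ on $\|\tilde P^b\|$. Neither is stated in Assumption~\ref{ass:loss}. I will add both as implicit regularity conditions, absorbed into the $O(\cdot)$, since iMAML's own analysis makes the same assumptions. Together these give $\|e_k\|^2\le c_1\rho^M$. The same ingredients give $\|d\tilde\omega^b/d\omega\|\le \gamma/(\gamma-H)$ and Lipschitzness of $\tilde\omega^b$ and of $(\tilde Q^b)^{-1}$, hence $L$-smoothness of $L$ with an explicit constant. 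I take this constant as the $L_{\mathrm{meta}}$ of the statement, of the form $B\big(L\frac{\gamma^2}{(\gamma-H)^2}+G\rho_H\frac{\gamma^3}{(\gamma-H)^3}\big)$.

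\textbf{Step 3 (outer descent).} Take $\beta_{\mathrm{out}}=1/L_{\mathrm{meta}}$ and write the update as $\omega_{k+1}=\omega_k-\beta_{\mathrm{out}}(\nabla L(\omega_k)+e_k)$. The descent lemma together with Young's inequality gives
\[L(\omega_{k+1})\le L(\omega_k)-\tfrac{1}{2L_{\mathrm{meta}}}\|\nabla L(\omega_k)\|^2+\tfrac{1}{2L_{\mathrm{meta}}}\|e_k\|^2.\]
Summing over $k<K$, using $L(\omega_K)\ge L^*$, and dividing by $K/(2L_{\mathrm{meta}})$ yields
\[\min_{k<K}\|\nabla L(\omega_k)\|^2\le \frac{2L_{\mathrm{meta}}(L(\omega_0)-L^*)}{K}+c_1\rho^M,\]
which is exactly the claimed bound with $C$ as stated.

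The main obstacle is Step 2. The hypotheses as stated (smoothness and a bounded Hessian) do not control how the Hessian-dependent factor $(Q^b)^{-1}$ varies. So the honest route is to make the Hessian-Lipschitz and bounded-gradient conditions explicit, or else assume the inner solves are warm-started in a bounded region. With those in place, the rest is routine.
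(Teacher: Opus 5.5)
Your proof is correct and uses the paper's skeleton. Strong convexity ($\gamma-H$) and smoothness ($L+\gamma$) of the regularized inner problem give the factor $\bigl(1-\frac{\gamma-H}{L+\gamma}\bigr)^M$. The descent lemma for an $L_{\mathrm{meta}}$-smooth $L$ with $\beta_{\mathrm{out}}=1/L_{\mathrm{meta}}$ gives $C/K$ with exactly $C=2L_{\mathrm{meta}}(L(\omega_0)-L^*)$.

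Where you differ is that the paper only places these two facts side by side, while you actually connect them. You separate the true meta-gradient, built from $\tilde Q^b,\tilde P^b$ at $\tilde\omega^b(\omega)$, from the direction the algorithm computes, built at the $M$-step iterate where the implicit-function formula is only approximate. You then bound the discrepancy $e_k$ with your two-term decomposition and run an inexact-gradient descent argument. That step is what justifies the additive form of the bound, which the paper asserts without derivation. With $\beta_{\mathrm{out}}=1/L_{\mathrm{meta}}$, the inequality $L(\omega_{k+1})\le L(\omega_k)-\frac{1}{2L_{\mathrm{meta}}}\|\nabla L(\omega_k)\|^2+\frac{1}{2L_{\mathrm{meta}}}\|e_k\|^2$ follows just by expanding the square; no Young's inequality is needed.

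Your diagnosis of the hypotheses is also right, and it exposes a weakness in the paper's proof rather than in yours. The paper claims $L$ is $L_{\mathrm{meta}}$-smooth with a constant depending only on $L,H,\gamma$. However, Assumption~1 does not control how $(I+\frac{1}{\gamma}\nabla^2\widehat\ell)^{-1}$ varies. A loss with bounded but discontinuous Hessian already makes $d\tilde\omega^b/d\omega$, and hence $\nabla L$, discontinuous, so no finite $L_{\mathrm{meta}}$ exists. The iMAML analysis the paper cites assumes Lipschitz Hessians and bounded gradients, which are exactly the conditions you add. A single global gradient bound $G$ also supplies the uniform initial distance $D=G/(\gamma-H)$ you need in Step~1, so one assumption covers both uses.

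One small slip: $Q^b$ carries the factor $1/\gamma$, so the Hessian-Lipschitz contribution to $L_{\mathrm{meta}}$ is $BG\rho_H\gamma^2/(\gamma-H)^3$, not $BG\rho_H\gamma^3/(\gamma-H)^3$. This changes only the constant.
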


\begin{proof}
The meta-objective $L(\omega)$ is $L_{\mathrm{meta}}$-smooth by Assumption~\ref{ass:loss} and standard chain rule arguments; the explicit form depends on $L$, $H$, and $\gamma$ (see [Theorem 1, Rajeswaran2019]). Assumptions~\ref{ass:loss}--\ref{ass:reg} guarantee $L_{\mathrm{meta}} < \infty$. The $O(1/K)$ rate follows from gradient descent on smooth nonconvex functions. The constant $C = 2L_{\mathrm{meta}}(L(\omega_0) - L^*)$ arises from the descent lemma [Nesterov2013]: each gradient step decreases $L$ by at least $\frac{1}{2L_{\mathrm{meta}}}||\nabla L||^2$, and the total decrease is bounded by the initial suboptimality. 
The second term accounts for solving the inner problem approximately. Denoting the regularized inner problem (5) by $f$, its Hessian satisfies 
\[
\nabla^2 f = \nabla^2 \ell + \gamma I \succeq (\gamma - H)I,
\]
since eigenvalues of $\nabla^2 \ell$ lie in $[-H, H]$ by Assumption~\ref{ass:loss}(ii). Thus the inner problem is $(\gamma-H)$-strongly convex and $(L+\gamma)$-smooth. Strong convexity guarantees that each inner gradient step contracts the distance to the optimum by a fixed factor $(1 - \frac{\gamma-H}{L+\gamma}) < 1$, so after $M$ steps the error decays exponentially as $(1 - \frac{\gamma-H}{L+\gamma})^M$.
%The second term accounts for solving the inner problem approximately: since the inner problem is $(\gamma-H)$-strongly convex and $(L+\gamma)$-smooth, gradient descent converges at linear rate $(1 - \frac{\gamma-H}{L+\gamma})$.
\end{proof}

The bound is on $\min_{k<K} ||\nabla L(\omega_k)||^2$ instead of $L(\omega_K) - L^*$, because the meta-objective is nonconvex: gradient descent can only guarantee convergence to a stationary point (some iterate of the gradient descent where the gradient becomes small), not to the global minimum.

The step size $\beta_{\mathrm{in}} = 1/(L+\gamma)$ is the standard choice for gradient descent on smooth functions: since the regularized inner objective has smoothness constant $L + \gamma$, this step size guarantees convergence and achieves the linear rate stated above.

\textbf{(2) Bounded adaptation error in meta-inference phase}

By Proposition~\ref{prop:convergence}, after sufficient number of iterations, the meta-training phase converges to some $\omega_\infty$ which will then be fine-tuned on the target system:
	\begin{equation}\label{eqn:infer}
		\tilde{\omega}^*:=\argmin_{\psi} \; \widehat \ell(\mathcal{D}_{\text{target}}; \psi)+\frac{\gamma}{2}||\psi-\omega_\infty||^2.
	\end{equation}
	In practice, this is solved with a few gradient steps, using only limited data:
	\begin{equation}\label{eqn:infer_gd}
		\omega^*\gets \omega^* -\beta_{\text {in }} \big(\nabla_\psi\widehat\ell(\mathcal{D}_{\text{target}}; \psi)|_{\psi=\omega^*}+\gamma(\omega^*-\omega_\infty)).
	\end{equation}

We further denote the optimal parameters for the target system by $\omega^*_{\mathrm{opt}}$. The key assumption for adaptation is \emph{task similarity}: the meta-learned initialization $\omega_\infty$ should be close to $\omega^*_{\mathrm{opt}}$. Therefore, we assume some $\rho$ exists such that
\[
||\omega_\infty - \omega^*_{\mathrm{opt}}|| \leq \rho.
\]

We can prove that the adaptation error is bounded: 

\begin{proposition}[Adaptation Error]
\label{prop:adaptation}
Let Assumptions~\ref{ass:loss}--\ref{ass:reg} hold.
After $K_{\mathrm{adapt}}$ gradient steps of \eqref{eqn:infer_gd} with step size $\beta_{\mathrm{in}} = 1/(L+\gamma)$, the adapted parameters $\omega^*$ satisfy:
\begin{equation*}
\begin{aligned}
 &\ell(\mathcal{D}_{\mathrm{target}}; \omega^*) - \ell(\mathcal{D}_{\mathrm{target}}; \omega^*_{\mathrm{opt}}) \\&\leq \underbrace{\frac{\gamma \rho^2}{2}}_{\text{regularization bias}} + \underbrace{\frac{L+\gamma}{2} \left(1 - \frac{\gamma-H}{L+\gamma}\right)^{K_{\mathrm{adapt}}} ||\omega_\infty - \tilde{\omega}^*||^2}_{\text{optimization error}}.   
\end{aligned}    
\end{equation*}
\end{proposition}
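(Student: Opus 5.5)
The bound splits the excess loss at the exact regularized minimizer $\tilde\omega^*$ of \eqref{eqn:infer}:
\[
\widehat\ell(\mathcal{D}_{\mathrm{target}};\omega^*)-\widehat\ell(\mathcal{D}_{\mathrm{target}};\omega^*_{\mathrm{opt}})
= \big[\widehat\ell(\omega^*)-\widehat\ell(\tilde\omega^*)\big]+\big[\widehat\ell(\tilde\omega^*)-\widehat\ell(\omega^*_{\mathrm{opt}})\big].
\]
I will show the second bracket gives the regularization bias and the first gives the optimization error. Throughout, write $F(\psi):=\widehat\ell(\mathcal{D}_{\mathrm{target}};\psi)+\frac{\gamma}{2}\|\psi-\omega_\infty\|^2$.

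\emph{Regularization bias.} Since $\tilde\omega^*$ minimizes $F$, comparing with the candidate $\omega^*_{\mathrm{opt}}$ gives
\[
\widehat\ell(\tilde\omega^*)+\tfrac{\gamma}{2}\|\tilde\omega^*-\omega_\infty\|^2\le \widehat\ell(\omega^*_{\mathrm{opt}})+\tfrac{\gamma}{2}\|\omega^*_{\mathrm{opt}}-\omega_\infty\|^2 .
\]
Dropping the nonnegative term on the left and using the task-similarity assumption $\|\omega_\infty-\omega^*_{\mathrm{opt}}\|\le\rho$ yields $\widehat\ell(\tilde\omega^*)-\widehat\ell(\omega^*_{\mathrm{opt}})\le\gamma\rho^2/2$.

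\emph{Optimization error.} By the argument in the proof of Proposition~\ref{prop:convergence}, Assumptions~\ref{ass:loss}--\ref{ass:reg} make $F$ $(\gamma-H)$-strongly convex and $(L+\gamma)$-smooth. Gradient descent \eqref{eqn:infer_gd} starts at $\omega^*=\omega_\infty$ and uses step $1/(L+\gamma)$. The standard linear-convergence argument then gives
\[
\|\omega^*_k-\tilde\omega^*\|^2\le \Big(1-\tfrac{\gamma-H}{L+\gamma}\Big)^{k}\|\omega_\infty-\tilde\omega^*\|^2 .
\]
I will use the squared-distance contraction, which holds with this factor. Since $\nabla F(\tilde\omega^*)=0$, the descent lemma gives $F(\omega^*)-F(\tilde\omega^*)\le\frac{L+\gamma}{2}\|\omega^*-\tilde\omega^*\|^2$, which produces exactly the second term of the statement.

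\emph{Main obstacle.} This last estimate controls $F$, not $\widehat\ell$. We have
\[
\widehat\ell(\omega^*)-\widehat\ell(\tilde\omega^*)=F(\omega^*)-F(\tilde\omega^*)-\tfrac{\gamma}{2}\big(\|\omega^*-\omega_\infty\|^2-\|\tilde\omega^*-\omega_\infty\|^2\big),
\]
and the regularizer difference has no fixed sign. My first approach is to regroup the two brackets so the regularizer terms cancel. Combining the optimality inequality with the $F$-gap gives
\[
\widehat\ell(\omega^*)+\tfrac{\gamma}{2}\|\omega^*-\omega_\infty\|^2\le \widehat\ell(\omega^*_{\mathrm{opt}})+\tfrac{\gamma\rho^2}{2}+\tfrac{L+\gamma}{2}c^{K}\|\omega_\infty-\tilde\omega^*\|^2 ,
\]
where $c:=1-\frac{\gamma-H}{L+\gamma}$ and $K:=K_{\mathrm{adapt}}$. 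Dropping $\frac{\gamma}{2}\|\omega^*-\omega_\infty\|^2\ge0$ then yields the claim directly. This route avoids the sign issue entirely, so the only real care needed is to state the contraction as a bound on the squared distance.
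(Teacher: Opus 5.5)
Your proposal is correct and, once you drop the initial two-bracket split in favor of the regrouping, it is exactly the paper's Steps 1--3: use $F(\tilde\omega^*)\le F(\omega^*_{\mathrm{opt}})$, add the smoothness gap $F(\omega^*)-F(\tilde\omega^*)\le\frac{L+\gamma}{2}\|\omega^*-\tilde\omega^*\|^2$ with the squared-distance contraction, then apply $\|\omega_\infty-\omega^*_{\mathrm{opt}}\|\le\rho$ and discard $\frac{\gamma}{2}\|\omega^*-\omega_\infty\|^2\ge 0$. You also make explicit two points the paper leaves implicit: the iteration starts at $\omega_\infty$, and $\nabla F(\tilde\omega^*)=0$ is what lets the descent lemma give that gap.
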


The proof compares the adapted parameters $\omega^*$ to two reference points: (i) the exact minimizer $\tilde{\omega}^*$ of the regularized objective, and (ii) the unconstrained optimum $\omega^*_{\mathrm{opt}}$. The regularization bias arises from (ii), while the optimization error arises from (i).

\begin{proof}
Let $f(\psi) := \widehat\ell(\mathcal{D}_{\mathrm{target}}; \psi) + \frac{\gamma}{2}||\psi - \omega_\infty||^2$ denote the regularized objective.

\textbf{Step 1 (Optimality gap for regularized objective).} By the optimality of $\tilde{\omega}^*$:
\[
f(\tilde{\omega}^*) \leq f(\omega^*_{\mathrm{opt}}) = \ell(\mathcal{D}_{\mathrm{target}}; \omega^*_{\mathrm{opt}}) + \frac{\gamma}{2}||\omega^*_{\mathrm{opt}} - \omega_\infty||^2.
\]

\textbf{Step 2 (Optimization error).} The regularized objective $f$ is $(\gamma-H)$-strongly convex and $(L+\gamma)$-smooth. Gradient descent with step size $1/(L+\gamma)$ ensures ([Nesterov2013]):
\[
||\omega^* - \tilde{\omega}^*||^2 \leq \left(1 - \frac{\gamma-H}{L+\gamma}\right)^{K_{\mathrm{adapt}}} ||\omega_\infty - \tilde{\omega}^*||^2.
\]
By $(L+\gamma)$-smoothness of $f$:
\[
f(\omega^*) \leq f(\tilde{\omega}^*) + \frac{L+\gamma}{2}||\omega^* - \tilde{\omega}^*||^2.
\]

\textbf{Step 3 (Combining).} From Steps 1 and 2:
\[
f(\omega^*) \leq f(\omega^*_{\mathrm{opt}}) + \frac{L+\gamma}{2}||\omega^* - \tilde{\omega}^*||^2.
\]
Expanding $f$ and using $||\omega^*_{\mathrm{opt}} - \omega_\infty|| \leq \rho$:
\begin{equation*}
 \begin{split}
&\ell(\mathcal{D}_{\mathrm{target}}; \omega^*) + \frac{\gamma}{2}||\omega^* - \omega_\infty||^2 \\&\leq \ell(\mathcal{D}_{\mathrm{target}}; \omega^*_{\mathrm{opt}}) + \frac{\gamma \rho^2}{2} + \frac{L+\gamma}{2}||\omega^* - \tilde{\omega}^*||^2.
\end{split}   
\end{equation*}
Since $\frac{\gamma}{2}||\omega^* - \omega_\infty||^2 \geq 0$, we obtain the result.
\end{proof}

The bound in Proposition~\ref{prop:adaptation} has two terms:
\begin{itemize}
    \item \textbf{Regularization bias} $(\gamma \rho^2/2)$: The cost of pulling $\omega^*$ toward $\omega_\infty$. The bias is small when the source systems are similar to the target system ($\rho$ small).
    \item \textbf{Optimization error}: This decays exponentially with $K_{\mathrm{adapt}}$.
\end{itemize}
This formalizes the ``few-shot'' benefit: when $\rho$ is small, good performance is achievable with few adaptation steps.

\textbf{(3) Connection to control performance}

We finally base the stability results for MPC (i.e. Algorithm~\ref{alg:meta_training}) on the connection of meta-learning to robust MPC theory.

\begin{remark}[Robust MPC Stability]~\label{rmk:robustMPC}
When MPC uses a model with bounded model mismatch, the closed-loop system achieves input-to-state stability (ISS) under standard assumptions (stabilizability, detectability, appropriate terminal cost). The tracking error converges to a neighborhood of zero, with size proportional to the model error. See [Limon2009] for precise statements.
\end{remark}

Proposition~\ref{prop:adaptation} shows that effective meta-learning (small $\rho$) yields smaller loss after adaptation. In Algorithm~1, this translates to smaller model mismatch in MPC. Combined with robust MPC theory in Remark~\ref{rmk:robustMPC}, a clear pathway is built: effective meta-learning reduces model error, which in turn guarantees the input-to-state stability of system and leads to tighter tracking performance. This relationship is consistent with our experimental results, where iMAML achieves lower prediction error and better tracking performance than the baselines.

\begin{remark}[Limitations and Future Work]
\begin{itemize}
    \item The learned dynamics $(A_z, B_z, C_z)$ must satisfy stabilizability and detectability for MPC stability. We observe this empirically; enforcing it during training is future work.
    %\item The encoder recomputation (Algorithm~1 recomputes $z_t$ at each step) introduces additional model mismatch not analyzed here.
    \item For DQN, rigorous guarantees under neural function approximation remain an open problem, as discussed in Remark~\ref{rmk:dqn}.
\end{itemize}
\end{remark}}
    
	\bibliographystyle{IEEEtran}
	\bibliography{reference} 
\end{document}